\documentclass[11pt]{article}
\usepackage[utf8]{inputenc}
\usepackage[T1]{fontenc}
\usepackage{geometry}
\usepackage{setspace}
\usepackage{mathtools}
\usepackage{graphicx, epstopdf}
\usepackage[arrow, matrix]{xy}
\usepackage{xcolor}
\usepackage{cite}
\usepackage{float}
\usepackage{algorithmic}
\usepackage{algorithm}
\usepackage{amssymb, amsmath, amsthm}
\usepackage{wrapfig}

\newtheorem{thm}{Theorem}
\newtheorem{defn}{Definition}
\newtheorem{lemma}{Lemma}
\newtheorem{pro}{Proposition}
\newtheorem{rk}{Remark}
\newtheorem{rt}{Result}

\def\s{\sigma}

\def\w{\omega}

\def\l{\lambda}

\def\b{\beta}

\title{Data-Driven Energy-Based Learning via Gibbs Measures on Hierarchical Structures}

 \date{}
\vspace{-1cm}
\begin{document}
	
		\maketitle
		
		\begin{center}
			\textbf{L.U. Abdullaev}\\
		National University of Singapore, Singapore \\
				Email: \texttt{laziz.abdullaev@u.nus.edu}
			
			\textbf{F. Herrera} \\
			Department of Computer Science and Artificial Intelligence, University of Granada, E-18071 Granada, Spain \\
			Email: \texttt{herrera@decsai.ugr.es}
			
			\textbf{U. A. Rozikov} \\
			V.I. Romanovskiy Institute of Mathematics, Uzbekistan Academy of Sciences, 9, Universitet str., 100174, Tashkent, Uzbekistan; \\
			National University of Uzbekistan, 4, Universitet str., 100174, Tashkent, Uzbekistan; \\
			Institute for Advanced Study in Mathematics, Harbin Institute of Technology, \\Harbin 150001, China. \\
			Email: \texttt{rozikovu@yandex.ru}\\
			\textbf{(Corresponding author)}
			
			\textbf{M. V. Velasco} \\
			Departamento de Análisis Matemático, Facultad de Ciencias, Universidad de Granada, 18071 Granada, Spain \\
			Email: \texttt{vvelasco@ugr.es}
		\end{center}

	\begin{abstract}
		
		We introduce a data-driven probabilistic framework for learning systems based on Gibbs measures on hierarchical structures. Unlike standard empirical risk minimization, where a dataset is used to identify a single optimal parameter, our approach transforms the empirical loss function into an interaction potential defining an energy-based model. The resulting Gibbs distribution describes a family of equilibrium learning states generated by the data.
		
		We formulate the consistency conditions of the associated finite-volume distributions and derive nonlinear integral fixed-point equations whose solutions characterize the admissible learning states. These equations provide a rigorous connection between empirical loss landscapes and probabilistic inference on trees. For translation-invariant solutions, the problem reduces to the analysis of positive compact operators induced by data-dependent kernels, allowing us to establish existence and uniqueness conditions in the one-dimensional setting.
		
		Furthermore, we show that hierarchical learning systems may exhibit phase-transition phenomena: for certain empirical kernels on Cayley trees, multiple Gibbs measures emerge beyond a critical inverse temperature, corresponding to distinct equilibrium prediction regimes. Numerical experiments with non-separable kernels illustrate the appearance of multiple solution branches and demonstrate the coexistence of several data-induced learning states.
		
		Our results provide a new perspective on energy-based learning, where data do not merely determine an optimal model through minimization but define an entire probabilistic landscape of possible inference states.
		
	\end{abstract}

%
%
%

		\noindent \textbf{Mathematics Subject Classifications (2010):} 82B20, 62C10, 68T07, 60J10.\\
		\noindent \textbf{Keywords:} Configuration, Ising model, empirical loss function, Gibbs measures, Cayley tree, machine learning.

    \section{Introduction}
    
    Machine learning has traditionally been formulated as an optimization problem, 
    where a training dataset is used to construct a model by minimizing an empirical 
    loss function.
    
    This approach has been highly successful in modern learning systems. However, 
    many contemporary models involve complex non-convex landscapes, uncertainty, 
    and multiple competing solutions, motivating probabilistic approaches where 
    learning is represented not only by a single optimal parameter but also by a 
    distribution over possible states.
    
    A natural framework for such probabilistic representations is provided by 
    energy-based models, where a learning system is described through an energy 
    function assigning different levels of compatibility to possible configurations. 
    This viewpoint connects machine learning with statistical mechanics, where 
    probability distributions are determined by energy landscapes. Energy-based 
    formulations have played an important role in machine learning through models 
    such as Boltzmann machines, neural networks, and probabilistic graphical models 
    \cite{AR,Ari,Bah, Beh, Krza, HRV, Le}. In these approaches, the equilibrium 
    distribution associated with an energy function provides a probabilistic 
    description of the possible states of the system.
    
    A fundamental object in statistical mechanics is the Gibbs measure, which 
    describes equilibrium states of interacting systems through their Hamiltonian. 
    Gibbs measures provide a rigorous framework for studying phase transitions, 
    long-range dependence, and probabilistic structures generated by interacting 
    components \cite{FV,Ge}. In recent years, ideas from statistical mechanics have 
    become increasingly relevant for machine learning, particularly for understanding 
    energy landscapes, stochastic inference, and the collective behavior of 
    high-dimensional systems.
    
    In this paper, we introduce a data-driven construction of Gibbs measures for 
    learning systems. Unlike classical statistical-mechanical models, where the 
    interaction potential is specified in advance, we construct the interaction 
    directly from the training data. More precisely, the empirical loss function of 
    a learning problem is interpreted as the interaction potential of the 
    Hamiltonian. Therefore, the dataset does not only determine a minimizer of the 
    loss function; instead, it generates a complete probabilistic model describing 
    the equilibrium states of the learning system.
    
    The proposed framework is based on Gibbs measures on Cayley trees. Hierarchical 
    structures, and in particular Cayley trees, provide an important mathematical 
    setting for studying recursive probabilistic models and phase transitions. Due 
    to their tree structure, finite-volume Gibbs distributions satisfy exact 
    compatibility relations, and these relations lead to nonlinear recursive 
    equations characterizing the infinite-volume Gibbs measures. Such methods have 
    been extensively developed in the study of spin systems, phase transitions, and 
    Gibbs states on trees \cite{EE, Ro,Rbp}.
    
    From the machine learning perspective, Cayley trees provide a natural framework 
    for hierarchical inference. Their recursive structure is closely related to 
    message-passing and belief-propagation methods used in probabilistic graphical 
    models. In our setting, the empirical loss function defines a data-dependent 
    interaction kernel, this kernel generates a Gibbs measure, and the resulting 
    equilibrium states represent possible global learning regimes induced by the 
    dataset.
    
    The central mathematical problem of the paper is the characterization of these 
    data-induced Gibbs measures. We derive the compatibility conditions for 
    finite-volume Gibbs distributions and obtain nonlinear integral equations whose 
    solutions determine the corresponding equilibrium states. For 
    translation-invariant solutions, these equations reduce to the analysis of 
    positive compact integral operators generated by the empirical loss kernel. 
    Consequently, questions about uniqueness, multiplicity, and phase transitions 
    of learning states become questions about the spectral properties of 
    data-dependent operators.
    
    The main contributions of this work are summarized as follows:
    
    \begin{itemize}
    	\item We introduce a statistical-mechanical formulation of learning in which 
    	empirical loss functions define the interaction potential of an energy-based 
    	model.
    	
    	\item We construct Gibbs measures associated with data-driven interaction 
    	kernels and derive the nonlinear integral equations characterizing their 
    	equilibrium states.
    	
    	\item We establish existence and uniqueness results for translation-invariant 
    	learning states using properties of positive compact operators.
    	
    	\item We show that data-induced learning systems may exhibit phase-transition 
    	phenomena, where several Gibbs measures coexist and correspond to different 
    	equilibrium prediction regimes.
    	
    	\item We provide numerical experiments for non-separable kernels demonstrating 
    	the emergence of multiple solution branches and several data-generated learning 
    	states.
    \end{itemize}
    
    The proposed approach provides a new probabilistic perspective on learning 
    systems. Instead of viewing learning only as the search for an optimal parameter, 
    we interpret the dataset as generating an equilibrium statistical structure 
    whose states describe possible inference regimes. This creates a rigorous 
    connection between empirical learning, energy-based models, and Gibbs theory on 
    hierarchical structures.
    
    The remainder of the paper is organized as follows. Section 2 introduces the 
    mathematical framework and the construction of the data-driven Gibbs model. 
    Section 3 derives the compatibility equations defining the Gibbs measures. 
    Section 4 studies translation-invariant solutions and establishes analytical 
    results. Section 5 discusses prediction rules induced by the constructed Gibbs 
    measures. Section 6 presents numerical experiments illustrating multiple 
    equilibrium learning states.

      \section{Preliminaries}

In this section, we recall the main definitions and introduce the necessary notation that will be used throughout the paper.
      
   \subsection{Definitions}  {\bf Cayley tree.} A Cayley tree \(\Gamma^k=(V,L)\), where \(V\) denotes the set of vertices and
\(L\) the set of edges, with branching factor \(k\geq 1\), is a connected
infinite graph in which every vertex has exactly \(k+1\) nearest neighbors.
The graph \(\Gamma^k\) is acyclic, i.e., it contains no closed paths or
cycles. 

Although the tree itself has no boundary due to its infinite
structure, finite subtrees (truncations) of the Cayley tree are frequently
considered in physical and computational applications.

Fix a vertex \(x^0\in V\), which will be referred to as the \emph{root} of the tree. 
A vertex \(y\in V\) is called a \emph{direct successor} of a vertex \(x\in V\) if \(x\) is 
the predecessor of \(y\) on the unique path connecting the root \(x^0\) to \(y\). 
Equivalently, this means that
\[
d(x^0,y)=d(x^0,x)+1,\qquad d(x,y)=1,
\]
where \(d(x,y)\) denotes the graph distance, i.e., the number of edges in the shortest 
path connecting \(x\) and \(y\). The set of all direct successors of a vertex \(x\in V\) 
is denoted by \(S(x)\).

For a fixed root \(x^0\in V\), the \emph{\(n\)-th level of the tree} is defined by
\[
W_n:=W_n(x^0)=\{x\in V:\ d(x,x^0)=n\},
\]
and the ball of radius \(n\) centered at \(x^0\) is given by
\begin{equation}\label{lp*}
V_n=\{x\in V:\ d(x,x^0)\leq n\},\qquad
L_n=\{l=\langle x,y\rangle\in L:\ x,y\in V_n\}.
\end{equation}

Since the root \(x^0\) is fixed throughout the paper, we omit it from the notation 
whenever there is no risk of ambiguity.

For any vertex \(x\in W_n\), the set of its \emph{direct successors} is given by

\begin{equation}\label{sx}
S(x)=\{y\in W_{n+1}: \langle x,y\rangle\in L\}.
\end{equation}

The regular structure of the Cayley tree, characterized by the fact that
each vertex has exactly \(k+1\) neighbors, makes it particularly convenient
for analytical investigations. In particular, Cayley trees play an important
role in statistical mechanics as hierarchical lattices for studying phase
transitions and Gibbs measures. They also appear in various areas such as
network theory, ecology, and computer science, where they provide natural
models for branching processes and hierarchical structures.

{\bf The model with random parameters.} We consider models where the  \emph{ spin} $\varphi$ takes 
values from $\{-1, 1\}$ and is assigned to the vertices of tree (for motivations, see the next section).
		For $A\subset V$, a parameter configuration, $\sigma_A$, and a spin configuration, $\varphi$, on $A$  
        are given by an arbitrary functions 
$$
\sigma_A : A \to [0,1], \qquad \varphi_A : A \to \{-1,1\}.
$$
Denote by  $\Omega_A=[0,1]^A$ and $\Sigma_A=\{-1, 1\}^A$  the sets of all 
parameter configurations and  spin configurations on $A$ respectively. A configuration $\sigma$ on $V$ is then defined as a function 
$$
 x\in V\mapsto\sigma (x)\in [0,1].
 $$
 The \emph{set of all parameter configurations} is $\Omega=[0,1]^V$ and the set of all spin-configurations is $\Sigma=\{-1,1\}^V$.
		 The (formal) \emph{Hamiltonian} of the model is:
		\begin{equation}\label{uy1}
			H(\sigma, \varphi)=-\sum_{\langle x,y\rangle\in L}
			\xi_{\sigma(x)\sigma(y)}\varphi(x)\varphi(y),
		\end{equation}
		where $\xi: (u,v)\in [0,1]^2\to \xi_{uv}\in \mathbb R$ is a given bounded,
		measurable function\footnote{Given a parameter configuration $\sigma$, the quantity 
			$\xi_{\sigma(x)\sigma(y)}$ represents the realization of the random variable
			$\xi_{\langle x,y\rangle}$ associated with the edge $\langle x,y\rangle$,
			where this random variable is defined by
			$
			\xi_{\langle x,y\rangle}(\sigma)
			=
			\xi_{\sigma(x)\sigma(y)}.$}, representing interaction parameter between spin-values $\varphi(x),\varphi(y)$ on the $\langle x,y\rangle$ 
		nearest neighbor vertices. We note that if $\xi_{\sigma(x)\sigma(y)}=J$, i.e.,  is constant, then Hamiltonian (\ref{uy1}) coincides with usual \emph{Ising model}.
		
		Let $\l$ be the Lebesgue measure on $[0,1]$. Below we consider pair $(\sigma, \varphi)$ of configurations $\sigma \in \Omega$, $\varphi\in \Sigma$. On the set of all
		configurations on $A$ the a priori measure $\l_A$ is introduced as
		the $|A|$-fold product of the measure $\l$. Here and further on
		$|A|$ denotes the cardinality of $A$. Therefore, $\lambda_A=\prod_{x\in A}\lambda$.
        
        We consider a standard
		sigma-algebra ${\mathcal B}$ of subsets of $X:=\Omega\times \Sigma$ generated
		by the measurable cylinder subsets.
		A probability measure $\mu$ on $(X,{\mathcal B})$
		is called a \emph{Gibbs measure} (with Hamiltonian $H$) if it satisfies the DLR equation, namely for any
		$n=1,2,\ldots$, and $\sigma_n\in\Omega_{V_n}$, $\varphi_n\in \Sigma_{V_n}$ :
		$$\mu\left(\left\{x=(\sigma, \varphi)\in X :\;
		\sigma\big|_{V_n}=\sigma_n, 	\varphi\big|_{V_n}=\varphi_n\right\}\right)= \int_{X}\mu ({\rm
			d}\omega)\nu^{V_n}_{\omega|_{W_{n+1}}} (\sigma_n, \varphi_n),$$ where $\omega\in X$ and  $\nu^{V_n}_{\omega|_{W_{n+1}}}$ is the
		conditional Gibbs density
		$$ \nu^{V_n}_{\omega|_{W_{n+1}}}(\sigma_n,\varphi_n)=\frac{1}{Z_n\left(
			\omega\big|_{W_{n+1}}\right)}\exp\;\left(-\beta H
		\left(\sigma_n,\,\varphi_n||\,\omega\big|_{W_{n+1}}\right)\right),
		$$
		and $\b={1\over T}$, where $T>0 $ is the temperature.
		Here and below, $W_l$ stands for a `sphere' and $V_l$ for a
		`ball' on the tree, of radius $l=1,2,\ldots$,
		centered at a fixed vertex $x^0$ (an origin):
		$$W_l=\{x\in V: d(x,x^0)=l\},\;\;V_l=\{x\in V: d(x,x^0)\leq l\};$$
		and
		$$L_n=\{\langle x,y\rangle\in L: x,y\in V_n\};$$
		distance $d(x,y)$, $x,y\in V$, is the length of (i.e., the number of edges in) the shortest path
		connecting $x$ with $y$.  Furthermore, $\sigma\big|_{V_n}$, $\varphi\big|_{V_n}$   and $\omega\big|_{W_{n+1}}$ denote, respectively, the
		restrictions of configurations $(\sigma,\varphi), w=
		(o,p)\in X$  to $V_n$ and $W_{n+1}$. Next,
		$\sigma_n:\;x\in V_n\mapsto \sigma_n(x)$ is a configuration in $V_n$ and
		$$H\left(\sigma_n\, \varphi_n||\,\omega\big|_{W_{n+1}}\right)
		=-\sum_{\langle x,y\rangle\in L_n}\xi_{\sigma_n(x)\sigma_n(y)}\varphi_n(x)\varphi_n(y)-\sum_{\langle x,y\rangle:\;x\in V_n, y\in W_{n+1}}
		\xi_{\sigma_n(x)o_n(y)}\varphi_n(x)p_n(y).$$
		Finally, $Z_n\left(\omega\big|_{W_{n+1}}\right)$
		stands for the partition function in $V_n$, with
		the boundary condition $\omega\big|_{W_{n+1}}$
%
		
		We use a standard definition of a translation-invariant measure.
		The main object of study in this paper
		is translation-invariant Gibbs measure for the model (\ref{uy1})
		on Cayley tree.

\subsection{Motivation from Machine Learning and Novelty of our Approach}\label{mot}

Machine learning is traditionally formulated as an optimization problem.
Given a training dataset
\[
\mathcal D=\{(x_i,y_i)\}_{i=1}^{N},
\]
a prediction function $\mathcal F:\mathcal X\times\Theta\to\mathbb R$,
and a loss function $\ell:\mathbb R\times\mathbb R\to[0,\infty)$,
one constructs the empirical loss
\[
\mathcal L_N(\theta)
=
\frac1N\sum_{i=1}^{N}
\ell\!\bigl(y_i,\mathcal F(x_i;\theta)\bigr)
\]
and seeks an optimal parameter
\[
\theta^\ast=
\arg\min_{\theta}\mathcal L_N(\theta).
\]
The resulting parameter is subsequently used for prediction,
classification, regression, or inference. Thus, within the standard
learning paradigm, the dataset determines a distinguished optimal model
through empirical risk minimization.

The present work adopts a fundamentally different viewpoint.
Rather than using the dataset to select a single optimal parameter,
we use it to construct a statistical-mechanical model whose interaction
structure is generated directly by the empirical loss.

To each vertex \(x\) of a Cayley tree we assign a local parameter $\sigma(x)\in[0,1].$

For neighboring parameters \(t,u\in[0,1]\) we define the empirical
interaction loss
\[
\mathcal L_N(t,u)
=
\frac1N\sum_{i=1}^{N}
\ell\!\bigl(y_i,\mathcal F(x_i;t,u)\bigr),
\]
which quantifies the average prediction error associated with the pair
\((t,u)\) on the dataset.
Instead of minimizing this quantity, we regard it as an interaction
kernel
\[
\xi_{tu}=\mathcal L_N(t,u),
\]
and use it to define the Hamiltonian (\ref{uy1}).

Consequently, the dataset does not determine a single parameter value.
Rather, it determines an interaction kernel and therefore a Gibbs
probability distribution on the space of parameter configurations.
The primary objects of interest are not minimizers of a loss function
but Gibbs measures and their equilibrium states.
In this way, the learning problem is transformed into the study of the
probabilistic structure generated by the data.

This viewpoint naturally connects machine learning with statistical
mechanics.
Translation-invariant Gibbs measures may be interpreted as macroscopic
learning states induced by the dataset.
Uniqueness of a Gibbs measure corresponds to the existence of a unique
global learning regime, whereas the coexistence of several Gibbs
measures indicates the presence of multiple competing regimes.
From this perspective, phase transitions represent qualitative changes
in the global behavior generated by the data.

A further connection with machine learning arises from the consistency
equations for Gibbs measures.
These equations reduce to nonlinear integral fixed-point equations whose
solutions determine the equilibrium states of the model.
Such equations are closely related to message-passing and
belief-propagation procedures appearing in probabilistic graphical
models and Bayesian inference.
Therefore, the present framework establishes a rigorous bridge between
empirical-loss-based learning models, Gibbs measures on trees, and
modern inference methodologies.

\begin{rk}
	The distinction between the present framework and classical machine
	learning should be emphasized.
	In standard empirical risk minimization, the dataset is used to select
	an optimal parameter
	\[
	\theta^\ast=
	\arg\min_{\theta}\mathcal L_N(\theta),
	\]
	and all subsequent predictions are based on this single parameter.
	In contrast, the dataset in our model does not select a minimizer.
	Instead, it generates the interaction kernel \(\xi(t,u)\) and therefore
	determines a Gibbs probability distribution on the entire space of
	configurations.
	Consequently, the principal mathematical objects are equilibrium Gibbs
	states rather than optimal parameters.
	From this perspective, learning is described not by optimization alone
	but by the statistical-mechanical structure induced by the empirical
	loss.
\end{rk}

\begin{rk}
	The dataset $\mathcal D=\{(x_i,y_i)\}_{i=1}^{N}$
	may also be interpreted as a collection of observations on a finite
	subset $\Lambda_N=\{x_1,\ldots,x_N\}$ of vertices of the Cayley tree.
	The observed values \(y_i\) may represent the field values
	\(\varphi(x_i)\), the local parameters \(\sigma(x_i)\), labels attached
	to the vertices, or more general quantities derived from the local
	configuration.
	Thus the dataset contains only partial information about an unknown
	global configuration on the tree.
	
	The empirical loss generated by these observations determines the
	interaction kernel \(\xi(t,u)\), which in turn defines a Gibbs measure
	on the space of all admissible configurations.
	Consequently, the Gibbs measure provides a probabilistic extension of
	the finite dataset from the observed set \(\Lambda_N\) to the entire
	tree.
	For any unobserved vertex \(x\in V\setminus\Lambda_N\), the information
	associated with that vertex can be inferred through the conditional
	distributions induced by the Gibbs measure.
	In particular, if \(y_i=\varphi(x_i)\), then a natural prediction is
	given by
	\[
	\widehat{\varphi}(x)
	=
	\mathbb E_\mu
	\!\left[
	\varphi(x)
	\,\Big|\,
	\varphi(x_1)=y_1,\ldots,\varphi(x_N)=y_N
	\right],
	\]
	where \(\mu\) denotes the Gibbs measure generated by the dataset.
	
	Therefore, the Gibbs measure plays a role analogous to a probabilistic
	predictor.
	The finite dataset serves as observed training information, while the
	Gibbs measure generated by the empirical loss provides a global
	probabilistic description of all compatible configurations.
	Prediction is performed through conditional distributions and
	expectations rather than through a single optimal parameter.
	In this sense, extending information from finitely many observed
	vertices to the entire tree becomes an inference problem governed by
	equilibrium states of a data-dependent statistical-mechanical system.
\end{rk}

\begin{rk}
	The choice of a Cayley tree is motivated not by the geometry of the
	dataset but by the role of the tree as a hierarchical inference
	structure. A finite dataset provides local information at finitely many
	vertices, and the central task is to propagate this information to
	unobserved parts of the system. The Cayley tree offers the simplest
	infinite graph with a recursive hierarchical organization and without
	cycles. Consequently, the corresponding Gibbs measures satisfy exact
	recursive consistency equations, which are closely related to
	message-passing and belief-propagation algorithms used in machine
	learning and probabilistic inference.
	
	Moreover, the exponential growth of the tree reflects the rapid
	expansion of possible information states generated from local
	observations. For these reasons the Cayley tree provides a natural
	mathematical framework for studying how empirical data induce global
	probabilistic structures and learning regimes.
\end{rk}

The main results of this paper show that fundamental properties of the
learning system are governed by the spectral characteristics of an
integral operator generated by the empirical loss.
As a consequence, questions concerning the existence, uniqueness, and
multiplicity of Gibbs measures become questions about the global
structure of the learning problem induced by the dataset.

	\section{An integral equation}\label{sec:integraleq}

Recall that we write $x<y$ if the path from $x^0$ to $y$ goes through $x$. Call vertex $y$ a \emph{direct successor} of
$x$ if $y>x$ and $x,y$ are nearest neighbors. Denote by $S(x)$ the set of direct successors of $x$.
Observe that any vertex $x\ne x^0$ has $k$ direct successors and $x^0$ has $k+1$.

Let $h:\;x\in V\mapsto h_x=(h_{t,u,x}, (t,u)\in [0,1]\times \{-1,1\}) \in \mathbb R^{[0,1]\times \{-1,1\}}$ be mapping of $x\in V\setminus
\{x^0\}$ with $|h_{t,u,x}|<C$ where $C$ is a constant which does not depend on $t,u$.  Given
$n=1,2,\ldots$, consider the probability distribution $\mu^{(n)}$ on $\Omega_{V_n}\times  \Sigma_{V_{n}}$ defined by
\begin{equation}\label{uy2}
	\mu^{(n)}(\sigma_n,\varphi_n)=Z_n^{-1}\exp\left(-\beta H(\sigma_n, \varphi_n)
	+\sum_{x\in W_n}h_{\sigma(x),\varphi(x), x}\right).
\end{equation}

Here  $Z_n$ is the corresponding partition function.
%

The probability distributions $\mu^{(n)}$ are compatible if for any $n\geq 1$ and configurations
$\sigma_{n-1}\in\Omega_{V_{n-1}}$, $\varphi_{n-1}\in \Sigma_{V_{n-1}}$ (see \cite{FV} and \cite{Ro} for several kind of compatibility):
\begin{equation}\label{uy4}
	\sum_{p_n\in \Sigma_{W_n}}\int_{\Omega_{W_n}}\mu^{(n)}(\sigma_{n-1}\vee o_n, \varphi_{n-1}\vee p_n)\lambda_{W_n}(d(o_n))=
	\mu^{(n-1)}(\sigma_{n-1}, \varphi_{n-1}).
\end{equation}
Here
$\sigma_{n-1}\vee o_n\in\Omega_{V_n}$ is the concatenation of
$\sigma_{n-1}$ and $o_n$.

In this case there exists a unique
measure $\mu$ such that	
\begin{equation}\label{muu}
\mu\left(\left\{x=(\sigma, \varphi)\in X :\;
\sigma\big|_{V_n}=\sigma_n, 	\varphi\big|_{V_n}=\varphi_n\right\}\right)=\mu^{(n)}(\sigma_n, \varphi_n).
\end{equation}

\begin{defn}\label{uyd1} 
The measure \(\mu\) defined by \eqref{muu} is called a 
\emph{Gibbs measure} corresponding to the Hamiltonian (\ref{uy1}) with the 
boundary fields \(h=\{h_x:x\in V\setminus\{x^0\}\}\) if the family of finite-volume 
distributions \(\{\mu^{(n)}\}_{n\geq1}\) given by \eqref{uy2} is compatible in the 
sense of \eqref{uy4}. In other words, \(\mu\) is the probability measure on 
\(X\) whose finite-volume marginals coincide with the Gibbs distributions 
\(\mu^{(n)}\) determined by the Hamiltonian and the boundary functions \(h_x\).
\end{defn}

\begin{pro}\label{uyp1} The probability distributions
	$	\mu^{(n)}(\sigma_n,\varphi_n)$, $n=1,2,\ldots$, in (\ref{uy2}) are compatible iff for
	any $x\in V\setminus\{x^0\}$
	the following equation holds:
	
	$$
	f(t,x)=\prod_{y\in S(x)}{\int_0^1\eta_{tu}f(u,y)du +\int_0^1\eta^{-1}_{tu}g(u,y)du\over \int_0^1\eta^{-1}_{0u}f(u,y)du+\int_0^1\eta_{0u}g(u,y)du}.
	$$
	\begin{equation}\label{uy5}
		g(t,x)=\prod_{y\in S(x)}{\int_0^1\eta^{-1}_{tu}f(u,y)du +\int_0^1\eta_{tu}g(u,y)du\over \int_0^1\eta^{-1}_{0u}f(u,y)du+\int_0^1\eta_{0u}g(u,y)du}.
	\end{equation}
	Here
	$$f(t, x)=\exp(h_{t,+,x}-h_{0, -, x}), \ \ g(t, x)=\exp(h_{t,-,x}-h_{0, -, x}), \ \ \eta_{tu}=\exp(\beta \xi_{tu}).$$ and
	$du=\l(du)$ is the Lebesgue measure.
\end{pro}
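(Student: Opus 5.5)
We follow the standard argument for Gibbs measures on trees (as in \cite{Ro}), adapted to the mixed continuous/discrete spin $(\sigma(x),\varphi(x))\in[0,1]\times\{-1,1\}$. The plan is to compute the left-hand side of (\ref{uy4}) explicitly, compare it with $\mu^{(n-1)}$, and read off a condition that decouples vertex by vertex.

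\emph{Necessity.} Write $H(\sigma_n,\varphi_n)=H(\sigma_{n-1},\varphi_{n-1})-\sum_{x\in W_{n-1}}\sum_{y\in S(x)}\xi_{\sigma(x)o(y)}\varphi(x)p(y)$. Insert this into (\ref{uy2}). The sum over $p_n\in\Sigma_{W_n}$ and the integral over $o_n\in\Omega_{W_n}$ then factorize over $y\in W_n$, since each $y$ interacts only with its predecessor. After this step, (\ref{uy4}) reads
\[
\frac{Z_{n-1}}{Z_n}\prod_{x\in W_{n-1}}\prod_{y\in S(x)}\sum_{s=\pm1}\int_0^1 \exp\bigl(\beta\xi_{\sigma(x)u}\varphi(x)s+h_{u,s,y}\bigr)\,du=\prod_{x\in W_{n-1}}\exp\bigl(h_{\sigma(x),\varphi(x),x}\bigr).
\]
Fix $x\in W_{n-1}$ and vary only $(\sigma(x),\varphi(x))=(t,\epsilon)$, keeping all other arguments fixed. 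The remaining factors do not depend on $(t,\epsilon)$. Dividing the identity at $(t,\epsilon)$ by the identity at the reference value $(0,-)$ cancels every other factor and the partition-function ratio. This gives
\[
\exp\bigl(h_{t,\epsilon,x}-h_{0,-,x}\bigr)=\prod_{y\in S(x)}\frac{\sum_s\int_0^1\eta_{tu}^{\epsilon s}e^{h_{u,s,y}}du}{\sum_s\int_0^1\eta_{0u}^{-s}e^{h_{u,s,y}}du}.
\]
Dividing numerator and denominator by $e^{h_{0,-,y}}$ in each factor turns $e^{h_{u,+,y}}$ into $f(u,y)$ and $e^{h_{u,-,y}}$ into $g(u,y)$. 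Taking $\epsilon=+$ yields the first equation and $\epsilon=-$ yields (\ref{uy5}). Since $n$ is arbitrary, this holds for every $x\ne x^0$.

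\emph{Sufficiency.} Reverse the computation. If (\ref{uy5}) holds, then for each $x\in W_{n-1}$ the product over $S(x)$ equals $a(x)\exp(h_{t,\epsilon,x})$, where $a(x)>0$ does not depend on $(t,\epsilon)$. The marginal of $\mu^{(n)}$ is therefore proportional to $\mu^{(n-1)}$ with constant $\frac{Z_{n-1}}{Z_n}\prod_x a(x)$. Both sides are probability measures, so this constant equals $1$ and (\ref{uy4}) holds.

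The main obstacle is a technical one. We must justify the Fubini-type factorization and the finiteness and positivity of all integrals, since the denominators must be nonzero. This follows from the boundedness of $\xi$ and of $h$ ($|h|<C$), which give $0<e^{-\beta\|\xi\|-C}\le$ integrand $\le e^{\beta\|\xi\|+C}$.

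A second point of care is the root. The equation is required only for $x\ne x^0$, and $x^0$ has $k+1$ successors. This does not matter here, because the boundary fields enter only on $W_n$ with $n\ge1$, and the comparison is done at vertices $x\in W_{n-1}$, which for $n\ge2$ are non-root. The $n=1$ compatibility involves $\mu^{(0)}$ and is handled by the same normalization argument.
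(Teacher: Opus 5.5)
Your proposal is correct and follows the paper's proof essentially step for step. For necessity, you factorize the marginal of $\mu^{(n)}$ over the successors $y\in S(x)$, then divide the identity at $(t,\epsilon)$ by the one at the reference state $(0,-)$. For sufficiency, you rewrite (\ref{uy5}) as $\prod_{y\in S(x)}(\cdots)=a(x)e^{h_{t,\epsilon,x}}$ and fix the constant $Z_{n-1}\prod_{x\in W_{n-1}} a(x)=Z_n$ by normalization. Your remarks on the positivity and finiteness of the integrals and on the root/indexing fill in details the paper leaves implicit.
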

\begin{proof} {\sl Necessity.}  Suppose that (\ref{uy4}) holds; we shall prove (\ref{uy5}).
	Substituting (\ref{uy2}) into (\ref{uy4}), obtain that  for any
	configurations $\sigma_{n-1}$: $x\in V_{n-1}\mapsto\sigma_{n-1}(x)\in [0,1]$, $\varphi_{n-1}$: $x\in V_{n-1}\mapsto\varphi_{n-1}(x)\in \Sigma$:
	
	\begin{equation}\label{uy6}\frac{Z_{n-1}}{Z_n}	\sum_{p_n\in \Sigma_{W_n}}\int_{\Omega_{W_n}}
		\exp\left(\sum_{x\in W_{n-1}}\sum_{y\in S(x)} (\beta \xi_{\sigma_{n-1}(x)\omega_n(y)}\varphi_{n-1}(x)p_n(y)+
		h_{\omega_n(y),p_n(y),y})\right)\l_{W_n}(d\w_n)
	\end{equation}
	$$ =\exp\left(\sum_{x\in
		W_{n-1}}h_{\sigma_{n-1}(x), \varphi_{n-1}(x),x}\right),$$
	
	From (\ref{uy6}) we get:
	$${Z_{n-1}\over Z_n}\sum_{p_n\in \Sigma_{W_n}}\int_{\Omega_{W_n}}
	\prod_{x\in W_{n-1}}\prod_{y\in S(x)} \exp\,(\beta \xi_{\sigma_{n-1}(x)\omega_n(y)}\varphi_{n-1}(x)p_n(y)+
	h_{\omega_n(y),p_n(y),y})d(\w_n(y))$$
	\begin{equation}\label{fh} = \prod_{x\in W_{n-1}} \exp\,(h_{\sigma_{n-1}(x), \varphi_{n-1}(x), x}). \end{equation}
	
	Fix $x\in W_{n-1}$ and consider two configurations $\varphi_{n-1}=\overline{\varphi}_{n-1}$ and
	$\varphi_{n-1}=\tilde{\varphi}_{n-1}$ on $W_{n-1}$ which coincide on $W_{n-1}\setminus \{x\}$,
	and rewrite now the equality (\ref{fh}) for $\overline{\varphi}_{n-1}(x)=1$ and $\tilde{\varphi}_{n-1}(x)=-1$. Then dividing both of them to the equality corresponding to $\tilde{\varphi}_{n-1}(x)=-1$ and $t=0$
	we get (\ref{uy5}).

	{\sl Sufficiency.} Suppose that (\ref{uy5}) holds. It is equivalent to
	the representations
	\begin{equation}\label{uy7}
		\prod_{y\in S(x)}\sum_{\epsilon\in \{-1,1\}}\int_0^1\exp\,(\beta \xi_{tu}s\epsilon +
		h_{u,\epsilon, y})du= a(x)\exp\,(h_{t,s,x}), t\in [0,1], s=-1,+1
	\end{equation}
	for some function $a(x)>0, x\in V.$
	We have
	$${\rm LHS \ \ of \ \  (\ref{uy4})}=\frac{1}{Z_n}\exp(-\b H(\s_{n-1}, \varphi_{n-1}))
	\l_{V_{n-1}}(d(\s_n))\times
	$$
	\begin{equation}\label{uy8}
		\prod_{x\in W_{n-1}} \prod_{y\in S(x)}\sum_{\varepsilon\in \{-1,1\}}\int_0^1\exp\,(\beta \xi_{\s_{n-1}(x)u}\varphi_{n-1}(x)\varepsilon+
		h_{u,\varepsilon,y})du,
	\end{equation}
	(where LHS means left hand side). 
	Substituting (\ref{uy7}) into (\ref{uy8}) and denoting $A_n(x)=\prod_{x \in W_{n-1}} a(x)$,
	we get
	\begin{equation}\label{uy9}
		{\rm RHS\ \  of\ \  (\ref{uy8}) }=
		\frac{A_{n-1}}{Z_n}\exp(-\b H(\s_{n-1},\varphi_{n-1})) \lambda_{V_{n-1}}(d\s)\prod_{x\in W_{n-1}}
		h_{\s_{n-1}(x),\varphi_{n-1}(x),x}.
	\end{equation}
	
	Since $\mu^{(n)}$, $n \geq 1$ is a probability, we should have
	$$ \sum_{\varphi_{n-1}\in \Sigma_{V_{n-1}}}\int_{\Omega_{V_{n-1}}}\lambda_{V_{n-1}}(d\s_{n-1})
	\sum_{p_n\in \Sigma_{W_n}}\int_{\Omega_{W_n}} \lambda_{W_{n}}(d\omega_{n})
	\mu^{(n)}(\sigma_{n-1}\vee \omega_n, \varphi_{n-1}\vee p_n) = 1. $$
	
	Hence from (\ref{uy9})  we get $Z_{n-1}A_{n-1}=Z_n$, and (\ref{uy4}) holds.
\end{proof}
From Proposition \ref{uyp1} it follows that for any $h=\{h_x\in \mathbb R^{[0,1]\times \{-1,1\}},\
\ x\in V\}$ satisfying (\ref{uy5}) there exists a unique Gibbs measure
$\mu$ and vice versa. However, the analysis of solutions to (\ref{uy5}) is
not easy. This difficulty depends on the given function $\xi$. In
the next sections we will find several conditions of such
functions and give some solutions of the corresponding integral
equations.

	\section{The case of Empirical loss Hamiltonian}
	
Let \(\mathcal X\) be a prescribed input space and let
$
\{(x_i,y_i)\}_{i=1}^{N}\subset \mathcal X\times\mathbb R
$
be a given training dataset, where \(x_i\in\mathcal X\) denotes an input sample and \(y_i\in\mathbb R\) is the corresponding target value.

Assume that a prediction mechanism is specified by a given function
$
\mathcal F:\mathcal X\times [0,1]^2\to\mathbb R.
$

For every input \(x\in\mathcal X\) and every pair of local parameters
\((t,u)\in[0,1]^2\), the quantity
$
\mathcal F(x;t,u)
$
represents the prediction generated by two neighboring vertices carrying parameters \(t\) and \(u\).

Let
$
\ell:\mathbb R\times\mathbb R\to[0,\infty)
$
be a prescribed loss function that measures the discrepancy between a target value \(y\) and a prediction \(\hat y\). Typical examples include the squared loss
$
\ell(y,\hat y)=(y-\hat y)^2
$
and the absolute loss
$
\ell(y,\hat y)=|y-\hat y|.
$

For a fixed pair \((t,u)\), the empirical interaction loss associated with the dataset is defined by
\[
\mathcal L_N(t,u)
=
\frac1N
\sum_{i=1}^{N}
\ell\!\left( y_i,\mathcal{F}(x_i;t,u)\right)
\]

Thus, \(\mathcal L_N(t,u)\) represents the average prediction error produced by the neighboring parameters \(t\) and \(u\) over the training sample.
Set
	\begin{equation}\label{eta}
	\eta_{tu}=\exp\big(\beta \mathcal L_N(t,u)\big).
	\end{equation}
%
%
	
	To adapt this framework to the Cayley tree setting, we proceed as follows:
	\begin{itemize}
		\item Associate to each vertex \(x\in V\) a local parameter \(\theta_x \in [0,1]\);
		\item Identify \(\sigma(x)\equiv \theta_x\);
		\item Retain spin variables \(\varphi(x)\in\{-1,1\}\).
	\end{itemize}
	
	We define a data-dependent interaction along edges by
	\[
	\xi_{\sigma(x)\sigma(y)} \;\longrightarrow\;
	\xi(\theta_x,\theta_y)
	:=
	\frac{1}{N}\sum_{i=1}^N
	\ell\big(y_i,\mathcal F(x_i;\theta_x,\theta_y)\big).
	\]
	
	Then consider the Hamiltonian (\ref{uy1}) with this interaction parameters.
	
	\subsection{Analysis of the solutions of the system (\ref{uy5}) in the case (\ref{eta}).}  
	
    {\bf The case \(k=1\).}
	The Cayley tree of order \(k=1\) coincides with the one-dimensional integer lattice \(\mathbb Z\).
	The root \(0\) is fixed; hence it suffices to solve the system independently on each half of \(\mathbb Z\).
	Identifying vertices with \(n=0,1,2,\ldots\), we have
	$
	S(n)=\{n+1\}.
	$
	
	In this case the compatibility equations \eqref{uy5} become 
	\begin{equation}\label{k1-f}
		f_n(t)=
		\frac{\displaystyle \int_0^1 \eta_{tu}\,f_{n+1}(u)\,du
			+\int_0^1 \eta_{tu}^{-1}\,g_{n+1}(u)\,du}
		{\displaystyle \int_0^1 \eta_{0u}^{-1}\,f_{n+1}(u)\,du
			+\int_0^1 \eta_{0u}\,g_{n+1}(u)\,du},
	\end{equation}
	\begin{equation}\label{k1-g}
		g_n(t)=
		\frac{\displaystyle \int_0^1 \eta_{tu}^{-1}\,f_{n+1}(u)\,du
			+\int_0^1 \eta_{tu}\,g_{n+1}(u)\,du}
		{\displaystyle \int_0^1 \eta_{0u}^{-1}\,f_{n+1}(u)\,du
			+\int_0^1 \eta_{0u}\,g_{n+1}(u)\,du}.
	\end{equation}
	where $\eta_{tu}$ is given by (\ref{eta}).
	
	We first study translation-invariant solutions
	\[
	f_n(t)\equiv f(t),
	\qquad
	g_n(t)\equiv g(t).
	\]
	
	Then the system is reduced to
	\begin{equation}\label{f1}
		f(t)=
		\frac{\displaystyle \int_0^1 \eta_{tu}f(u)\,du
			+\int_0^1 \eta_{tu}^{-1}g(u)\,du}
		{D},
	\end{equation}
	\begin{equation}\label{g1}
		g(t)=
		\frac{\displaystyle \int_0^1 \eta_{tu}^{-1}f(u)\,du
			+\int_0^1 \eta_{tu}g(u)\,du}
		{D},
	\end{equation}
	where
	\[
	D=
	\int_0^1 \eta_{0u}^{-1}f(u)\,du
	+
	\int_0^1 \eta_{0u}g(u)\,du.
	\]
    	
	Introduce 
	\[
	s(t)=f(t)+g(t),
	\qquad
	r(t)=f(t)-g(t).
	\]
	
	Adding and subtracting \eqref{f1} and \eqref{g1}, we obtain
	$$	s(t)	= \frac1D
		\int_0^1
		\bigl(\eta_{tu}+\eta_{tu}^{-1}\bigr)s(u)\,du, \ \ 
		r(t)
		=
		\frac1D
		\int_0^1
		\bigl(\eta_{tu}-\eta_{tu}^{-1}\bigr)r(u)\,du.$$
	
	Since \(\eta_{tu}=e^{\beta \mathcal L_N(t,u)}\), we have
	\[
	\eta_{tu}+\eta_{tu}^{-1}=2\cosh(\beta \mathcal L_N(t,u)),
	\qquad
	\eta_{tu}-\eta_{tu}^{-1}=2\sinh(\beta \mathcal L_N(t,u)).
	\]
	
	Thus,
	$$
		s(t)=\frac{2}{D}
		\int_0^1 \cosh(\beta \mathcal L_N(t,u))\,s(u)\,du, \ \
		r(t)=\frac{2}{D}
		\int_0^1 \sinh(\beta \mathcal L_N(t,u))\,r(u)\,du.
	$$
	
	Define the operators
	$$
	(K_c\phi)(t)=\int_0^1 \cosh(\beta \mathcal L_N(t,u))\phi(u)\,du,
	\ \
	(K_s\phi)(t)=\int_0^1 \sinh(\beta \mathcal L_N(t,u))\phi(u)\,du.
	$$
	
	Then
	\begin{equation}\label{eigen}
		K_c s=\lambda s,
		\qquad
		K_s d=\lambda d,
	\end{equation}
	where \(\lambda=D/2\).
	
%
	
Assume that the empirical loss $\mathcal L_N(t,u)$ is continuous on $[0,1]^2$.
Then the kernels
\[
A(t,u)=\cosh\!\bigl(\beta \mathcal L_N(t,u)\bigr),
\qquad
B(t,u)=\sinh\!\bigl(\beta \mathcal L_N(t,u)\bigr)
\]
are continuous on $[0,1]^2$ as well.

Hence the associated integral operators
$
(K_c\phi)(t)$ and $(K_s\phi)(t)$
are compact operators on $C([0,1])$ and on $L^2([0,1])$.
Indeed, continuity of the kernels implies that $K_c$ and $K_s$ are Hilbert–Schmidt operators on $L^2([0,1])$, and therefore compact; compactness on $C([0,1])$ follows from the Arzelà–Ascoli theorem applied to the image of the unit ball.

%

Since
\[
A(t,u)=\cosh\!\bigl(\beta \mathcal L_N(t,u)\bigr) > 0
\quad \text{for all } (t,u)\in[0,1]^2,
\]
it follows that $K_c$ is a strongly positive operator, i.e.,
\[
\phi \ge 0,\ \phi \not\equiv 0 \quad \Longrightarrow \quad (K_c\phi)(t) > 0 \ \text{for all } t.
\]

Therefore, by the Krein–Rutman theorem \cite{Du}, the spectral radius
$r(K_c)$ is a simple eigenvalue of $K_c$, and there exists a unique (up to multiplication by a positive scalar) eigenfunction
$s_* \in C([0,1])$, strictly positive on $[0,1]$, such that
\[
K_c s_* = r(K_c)\, s_*.
\]
Moreover, every nonnegative eigenfunction associated with $r(K_c)$ is proportional to $s_*$.

\medskip

To do positivity comparison between $K_c$ and $K_s$ we assume now that
\[
\mathcal L_N(t,u) > 0 \quad \text{for all } (t,u)\in[0,1]^2.
\]
Then
\[
0 < B(t,u) = \sinh\!\bigl(\beta \mathcal L_N(t,u)\bigr)
< \cosh\!\bigl(\beta \mathcal L_N(t,u)\bigr) = A(t,u),
\]
so that
\[
0 < K_s \le K_c \quad \text{(pointwise ordering of kernels)}.
\]

Both operators are therefore positive, and $K_s$ is also strongly positive.

By the monotonicity property of the spectral radius for positive compact operators,
\[
0 < K_s \le K_c \ \text{and } K_s \ne K_c
\quad \Longrightarrow \quad
r(K_s) < r(K_c).
\]

\medskip

\noindent
\textbf{Consequence for the antisymmetric component.}
Since the normalization constant $\lambda$ is determined by the system \eqref{eigen}, consistency of both equations would require a common eigenvalue of $K_c$ and $K_s$.

However, because
\[
r(K_s) < r(K_c),
\]
the only possible way for both equations to be compatible at the dominant spectral level is that the antisymmetric component vanishes. More precisely, $K_s$ cannot support a nontrivial positive or dominant eigenmode at the same spectral level as $K_c$.

Hence the only admissible solution in the translation-invariant class is
$
r(t) \equiv 0,
$ 
which implies $
f(t) = g(t).$

Thus the Gibbs-compatible boundary law is symmetric, and the corresponding Gibbs measure is unique in this regime.

%
%
	
	\begin{thm}
		Assume $\mathcal L_N(t,u)>0, (t,u)\in[0,1]^2.$
		Then the compatibility equations \eqref{uy5} on a Cayley tree of order \(1\) admit a unique positive translation-invariant solution, determined by the dataset through \(\mathcal F\).
	\end{thm}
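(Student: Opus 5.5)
The plan is to work entirely with the reduction already carried out above. A translation-invariant solution of \eqref{uy5} for $k=1$ is a pair $(f,g)$ of positive continuous functions satisfying \eqref{f1}--\eqref{g1}. Equivalently, with $s=f+g$ and $r=f-g$,
\[
K_c s=\tfrac{D}{2}\,s,\qquad K_s r=\tfrac{D}{2}\,r,\qquad D=\int_0^1\eta_{0u}^{-1}f(u)\,du+\int_0^1\eta_{0u}g(u)\,du .
\]
Positivity of $(f,g)$ means $s>0$ and $|r|<s$. Setting $t=0$ in \eqref{g1} gives $g(0)=D/D=1$ automatically. This is consistent with the definition $g(t,x)=\exp(h_{t,-,x}-h_{0,-,x})$, and it will serve as the normalization that fixes the free scalar multiple. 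I will prove existence and uniqueness separately.

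\emph{Existence.} Let $s_*>0$ be the Krein--Rutman eigenfunction of $K_c$, with $K_c s_*=r(K_c)s_*$. I will set $f=g=c\,s_*/2$ for a constant $c>0$ still to be chosen, so that $r\equiv0$ and the $K_s$-equation holds trivially. With $f=g$, the formula for $D$ becomes
\[
D=c\int_0^1\cosh(\beta\mathcal L_N(0,u))\,s_*(u)\,du=c\,(K_c s_*)(0)=c\,r(K_c)\,s_*(0).
\]
The $s$-equation requires $D/2=r(K_c)$, which forces $c\,s_*(0)=2$, i.e.\ $c=2/s_*(0)$. This $c$ is positive because $s_*(0)>0$. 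Then \eqref{f1}--\eqref{g1} hold, and the resulting pair satisfies $g(0)=1$, as it must.

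\emph{Uniqueness.} Let $(f,g)$ be any positive solution. Then $s>0$ is a positive eigenfunction of $K_c$ with eigenvalue $D/2$. By the Krein--Rutman statement recalled above, every nonnegative eigenfunction belongs to the spectral radius. Hence $D/2=r(K_c)$ and $s=c\,s_*$. It remains to show that $r\equiv0$. Suppose not. Then $r(K_c)$ would be an eigenvalue of $K_s$, so $r(K_c)\le r(K_s)$. The argument therefore reduces to the strict inequality $r(K_s)<r(K_c)$.

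\emph{Main obstacle: the strict inequality.} This is the step I regard as the crux, and I will prove it directly rather than quoting a monotonicity principle. Since $\mathcal L_N>0$ is continuous, the kernel of $K_s$ is strictly positive, so the adjoint $K_s^*$ (with kernel $\sinh(\beta\mathcal L_N(u,t))$) is also compact and strongly positive. Krein--Rutman then gives $\psi>0$ with $K_s^*\psi=r(K_s)\psi$. The kernel difference $\cosh-\sinh=e^{-\beta\mathcal L_N}$ is strictly positive, so $K_c s_*>K_s s_*$ pointwise. Pairing with $\psi$ yields
\[
r(K_c)\langle\psi,s_*\rangle=\langle\psi,K_cs_*\rangle>\langle\psi,K_ss_*\rangle=\langle K_s^*\psi,s_*\rangle=r(K_s)\langle\psi,s_*\rangle .
\]
Since $\langle\psi,s_*\rangle>0$, this gives $r(K_s)<r(K_c)$. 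Hence $r\equiv0$ and $f=g=c\,s_*/2$. Finally, the normalization $g(0)=1$ (equivalently $D=2r(K_c)$) forces $c=2/s_*(0)$, which is exactly the solution built in the existence step. Because $\mathcal L_N$ enters only through $\mathcal F$ and the dataset, so does this solution.
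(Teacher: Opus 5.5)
Your proposal is correct and follows the same route as the paper: you split into $s=f+g$, $r=f-g$, apply Krein--Rutman to $K_c$, and use the strict inequality $r(K_s)<r(K_c)$ to force $r\equiv0$. Your adjoint-pairing proof of that inequality, the explicit matching $D/2=r(K_c)$, and the normalization $g(0)=1$ make rigorous what the paper only sketches, since it cites monotonicity of the spectral radius and argues loosely about ``the dominant spectral level''. The one citation to adjust is that ``every positive eigenfunction of $K_c$ belongs to $r(K_c)$'' is not the statement the paper recalls; it does, however, follow from your own pairing trick applied to a positive eigenfunction $\psi_c$ of $K_c^*$, since $\tfrac{D}{2}\langle\psi_c,s\rangle=\langle\psi_c,K_cs\rangle=r(K_c)\langle\psi_c,s\rangle$ with $\langle\psi_c,s\rangle>0$.
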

	
	\begin{proof}
		Any translation-invariant solution satisfies \(f(t)=g(t)=h(t)\), leading to the eigenvalue problem
		\[
		K_N h = r_N h,
		\quad
		K_N\phi=\int_0^1 \cosh(\beta \mathcal L_N(t,u))\phi(u)\,du.
		\]
		By Krein–Rutman theorem, a unique positive eigenfunction exists, giving the desired solution.
	\end{proof}

	\subsection{Explicit solutions for particular datasets}
	
	In general, the positive eigenfunction $\psi_N$, satisfying
	\begin{equation}\label{eigen-problem}
		\int_0^1
		\cosh\!\bigl(\beta \mathcal L_N(t,u)\bigr)
		\psi_N(u)\,du
		=
		r_N\,\psi_N(t),
	\end{equation}
	cannot be determined in closed form. However, explicit solutions can be obtained for several important classes of empirical loss functions.
	
	\subsubsection{Case 1. Constant empirical loss}
	
	Suppose that
	\[
	\mathcal L_N(t,u)\equiv c.
	\]
	Then
	\[
	K_N\phi
	=
	\cosh(\beta c)\int_0^1\phi(u)\,du.
	\]
	Hence $K_N$ is a rank-one operator.
	
	The eigenvalue equation becomes
	\[
	\cosh(\beta c)\int_0^1\psi_N(u)\,du
	=
	r_N\psi_N(t).
	\]
	Therefore $\psi_N(t)$ must be constant. After normalization,
	$\psi_N(t)\equiv 1,
	$
	and
	$
	r_N=\cosh(\beta c).
	$
		Consequently, $f(t)=g(t)=1.$
		
\subsubsection{Case 2. Additive empirical loss}

Assume that the empirical loss has the additive structure
\[
\mathcal L_N(t,u)=a(t)+a(u),
\]
where \(a:[0,1]\to\mathbb R\) is a continuous function. Let
\begin{equation} \label{nuevo}
p(t)=e^{\beta a(t)}, \qquad q(t)=e^{-\beta a(t)}.
\end{equation}
Then the kernel factorizes as
$
\eta_{tu}=p(t)p(u)$,  $\eta_{tu}^{-1}=q(t)q(u).
$

The translation-invariant compatibility equations reduce to
\[
f(t)=
\frac{
	p(t)\int_0^1 p(u)f(u)\,du
	+
	q(t)\int_0^1 q(u)g(u)\,du
}{D},
\]
\[
g(t)=
\frac{
	q(t)\int_0^1 q(u)f(u)\,du
	+
	p(t)\int_0^1 p(u)g(u)\,du
}{D},
\]
where
\[
D=\int_0^1 q(u)f(u)\,du+\int_0^1 p(u)g(u)\,du.
\]

Introduce the scalar quantities
\[
A=\int_0^1 p(u)f(u)\,du, \quad
B=\int_0^1 q(u)g(u)\,du, \quad
C=\int_0^1 q(u)f(u)\,du, \quad
E=\int_0^1 p(u)g(u)\,du.
\]

\begin{rk}\label{rkp}
By Proposition 1, we have that \(f(t)\) and \(g(t)\) are strictly positive, while \(p(t)\) and \(q(t)\) are strictly positive by (20). Hence,
\[
A,B,C,D,E>0.
\]

\end{rk}

With this notation, the solutions take the form
\begin{equation}\label{fA}
f(t)=\frac{A p(t)+B q(t)}{D}, \qquad
g(t)=\frac{C q(t)+E p(t)}{D}.
\end{equation}
Thus every solution of the compatibility equations belongs to the
two-dimensional space
\[
\operatorname{span}\{e^{\beta a(t)},\,e^{-\beta a(t)}\}.
\]

\medskip

{\bf Case $f=g$.} We now restrict ourselves to symmetric translation-invariant solutions,
that is,
\[
f=g=h.
\]
In this case,
\[
h(t)=\frac{A e^{\beta a(t)}+B e^{-\beta a(t)}}{A+B}
=\alpha e^{\beta a(t)}+(1-\alpha)e^{-\beta a(t)},
\]
where
\begin{equation}\label{alp}
	\alpha=\frac{A}{A+B}.
\end{equation}
By Remark \ref{rkp} we have that $\alpha\in (0,1).$

Substituting this representation into the definitions of \(A\) and \(B\),
we obtain
\[
A=\alpha I_2+(1-\alpha), \qquad
B=\alpha+(1-\alpha)I_{-2},
\]
where
\begin{equation}\label{I12}
I_2=\int_0^1 e^{2\beta a(u)}\,du, \qquad
I_{-2}=\int_0^1 e^{-2\beta a(u)}\,du.
\end{equation}

Using (\ref{alp}), we arrive at the consistency equation
\[
\alpha(1-\alpha)(I_2-I_{-2})+1-2\alpha=0.
\]

Let $\Delta:=I_2-I_{-2}$.

Then the equation can be rewritten as
\[
\Delta\alpha^2-(\Delta-2)\alpha-1=0.
\]

If \(\Delta\neq0\), the solutions are
\[
\alpha_{\pm}
=
\frac{\Delta-2\pm\sqrt{\Delta^2+4}}{2\Delta}.
\]

A direct inspection shows that exactly one root lies in  \((0,1)\):
\[
\alpha_+\in(0,1), \quad \alpha_-<0 \quad \text{if } \Delta>0,
\]
\[
\alpha_-\in(0,1), \quad \alpha_+>1 \quad \text{if } \Delta<0.
\]

In the degenerate case \(\Delta=0\), the equation reduces to
$
\alpha=\frac12.
$


Define 
$$
	\alpha^*=\left\{\begin{array}{lll}
			\alpha_- \ \ \text{if} \ \ \Delta<0,\\[2mm]
			\frac12, \ \ \text{if} \ \ \Delta=0,\\[2mm]
\alpha_+, \ \ \text{if} \ \ \Delta>0.
\end{array}\right.
$$
Then \(\alpha^*\in (0,1)\), and the corresponding function
\begin{equation}\label{hu}
	h(t)
	=
	\alpha^* e^{\beta a(t)}
	+
	(1-\alpha^*)e^{-\beta a(t)}
\end{equation}
is strictly positive and generates a symmetric translation-invariant Gibbs measure.

Thus the additive empirical loss yields a unique symmetric
translation-invariant Gibbs measure determined by the solution
\(\alpha^*\) of the above quadratic equation.

{\bf Case $f\ne g$.}  We now consider the non-symmetric case \(f \neq g\). 
Introduce the representation
\begin{equation}\label{fx}
f(t)=x\,p(t)+y\,q(t), \qquad
g(t)=z\,q(t)+w\,p(t),
\end{equation}
for some positive constants \(x,y,z,w>0\). Substituting into the definitions of
\(A,B,C,E\), we obtain the linear relations
\begin{equation}\label{AB}
A = x I_2 + y, \qquad
B = z I_{-2} + w, \qquad
C = x + y I_{-2}, \qquad
E = z + w I_2,
\end{equation}
where $I_2$ and $I_{-2}$ are defined by (\ref{I12}).

On the other hand, comparing coefficients in (\ref{fA}) and (\ref{fx}) 
yields the algebraic consistency system
\begin{equation}\label{xy}
x = \frac{A}{D}, \qquad y = \frac{B}{D}, \qquad
z = \frac{C}{D}, \qquad w = \frac{E}{D}.
\end{equation}

Substituting (\ref{AB}) into (\ref{xy}) we obtain the system

\begin{equation}\label{sysxy}
	\begin{aligned}
		xD &= x I_2 + y,\\
		yD &= z I_{-2} + w,\\
		zD &= x + y I_{-2},\\
		wD &= z + w I_2.
	\end{aligned}
\end{equation}

Subtracting the fourth equation from the first yields

\begin{equation}\label{eqxw}
	(D-I_2)(x-w)=y-z.
\end{equation}

Similarly, subtracting the third equation from the second gives

\begin{equation}\label{eqyz}
	(D+I_{-2})(y-z)=-(x-w).
\end{equation}

Substituting (\ref{eqxw}) into (\ref{eqyz}), we obtain
\[
\Bigl[(D+I_{-2})(D-I_2)+1\Bigr](x-w)=0.
\]

We now show that the coefficient is strictly positive. From the first equation of
(\ref{sysxy}),
\[
x(D-I_2)=y.
\]
By Remark \ref{rkp} and (\ref{xy}) we have  \(x>0\) and \(y>0\), therefore, a positive solution exists if 
$
D-I_2>0.
$
Moreover,
$
D+I_{-2}>0.
$
Therefore,
\[
(D+I_{-2})(D-I_2)+1>1>0.
\]

Consequently, $x-w=0$.
Substituting this into (\ref{eqxw}) yields
$
y-z=0.
$

Hence
\[
f(t)=x\,p(t)+y\,q(t)=
g(t)=z\,q(t)+w\,p(t)
=y\,q(t)+x\,p(t).
\]
We conclude that every translation-invariant solution is necessarily symmetric.
Hence there are no genuinely asymmetric translation-invariant solutions.

\subsubsection{Case 3. Quadratic loss with a linear model}\label{sec:quadratic-nonseparable}

Consider the linear predictor $\mathcal F(x;t,u)=tx+u$,
and the squared-error loss
$\ell(y,z)=(y-z)^2.$
Then the empirical loss takes the form
\[
\mathcal L_N(t,u)
=
\frac1N\sum_{i=1}^{N}
(y_i-tx_i-u)^2.
\]

Expanding the square, we obtain
\[
(y_i-tx_i-u)^2
=
y_i^2+t^2x_i^2+u^2
-2tx_i y_i
-2u y_i
+2tux_i.
\]
Therefore
\[
\mathcal L_N(t,u)
=
At^2+Bu^2+Ctu+Dt+Eu+F,
\]
where
\[
A=\frac1N\sum_{i=1}^{N}x_i^2,
\qquad
B=1,
\qquad
C=\frac2N\sum_{i=1}^{N}x_i,
\]
\[
D=-\frac2N\sum_{i=1}^{N}x_i y_i,
\qquad
E=-\frac2N\sum_{i=1}^{N}y_i,
\qquad
F=\frac1N\sum_{i=1}^{N}y_i^2.
\]

Notice that the mixed coefficient $C$
is proportional to the empirical mean of the input data.

\noindent
{\bf Case \(C=0\).} 
In this case the mixed term \(tu\) disappears and the empirical loss
separates into a sum of two one-variable functions:
\[
\mathcal L_N(t,u)
=
p(t)+q(u),
\]
where
\[
p(t)=At^2+Dt,
\qquad
q(u)=Bu^2+Eu+F.
\]

This is the Case 2 discussed above, and there is unique solution.

{\bf Case \(C\neq0\).} In this case the loss contains the interaction term \(Ctu\), and therefore does not
separate into a sum of a function of \(t\) and a function of \(u\).

The kernel of the transfer operator is
\[
K(t,u)
=
\exp\!\Bigl(
-\beta
\bigl(
At^2+Bu^2+Ctu+Dt+Eu+F
\bigr)
\Bigr).
\]
Since the exponent is continuous on the compact square
\([0,1]^2\), the kernel is strictly positive and continuous:
\[
K(t,u)>0,
\qquad
(t,u)\in[0,1]^2.
\]

Hence,
\[
(K_Nf)(t)
=
\int_0^1 K(t,u)f(u)\,du
\]
defines a compact positive integral operator on
\(C([0,1])\) and on \(L^2([0,1])\).

By the Krein--Rutman theorem, the spectral radius
$r(K_N)=\lambda_N$ is a simple positive eigenvalue, and there exists a unique
(up to multiplication by a positive constant)
strictly positive eigenfunction $\psi_N(t)>0$.


Unlike the case \(C=0\), the kernel is not separable. Indeed,
\[
K(t,u)
=
e^{-\beta(At^2+Dt+F)}
e^{-\beta(Bu^2+Eu)}
e^{-\beta Ctu},
\]
and the coupling factor \(e^{-\beta Ctu}\) prevents a factorization
into a single product of a function of \(t\) and a function of \(u\). See Section \ref{sec:exp} for complementary numerical experiments for the case $C \ne 0$.






 \subsubsection{Case $k=2$: An explicit example \(\eta_{tu}=e^{\beta(t+u)}\).}\label{k2a} 
 
  In this case the translation-invariant equations for the Cayley tree of order \(2\) are
	\begin{equation}\label{fexp}
		f(t)=
		\left(
		\frac{
			\displaystyle
			\int_0^1 e^{\beta(t+u)}f(u)\,du
			+
			\int_0^1 e^{-\beta(t+u)}g(u)\,du
		}
		{D}
		\right)^2,
	\end{equation}
	\begin{equation}\label{gexp}
		g(t)=
		\left(
		\frac{
			\displaystyle
			\int_0^1 e^{-\beta(t+u)}f(u)\,du
			+
			\int_0^1 e^{\beta(t+u)}g(u)\,du
		}
		{D}
		\right)^2,
	\end{equation}
	where
	\[
	D=
	\int_0^1 e^{-\beta u}f(u)\,du
	+
	\int_0^1 e^{\beta u}g(u)\,du .
	\]
	
	Introduce
	\begin{equation}\label{hk}
	h(t)=\sqrt{f(t)},
	\qquad
	k(t)=\sqrt{g(t)}.
	\end{equation}
	Then (\ref{fexp})--(\ref{gexp}) are equivalent to
	\begin{equation}\label{hexp}
		h(t)=
		\frac{
			e^{\beta t}\displaystyle\int_0^1 e^{\beta u}h^2(u)\,du
			+
			e^{-\beta t}\displaystyle\int_0^1 e^{-\beta u}k^2(u)\,du
		}
		{D},
	\end{equation}
	\begin{equation}\label{kexp}
		k(t)=
		\frac{
			e^{-\beta t}\displaystyle\int_0^1 e^{-\beta u}h^2(u)\,du
			+
			e^{\beta t}\displaystyle\int_0^1 e^{\beta u}k^2(u)\,du
		}
		{D}.
	\end{equation}
	
	Define
	\[
	A=\int_0^1 e^{\beta u}h^2(u)\,du,
	\qquad
	B=\int_0^1 e^{-\beta u}k^2(u)\,du,
	\]
	\[
	C=\int_0^1 e^{-\beta u}h^2(u)\,du,
	\qquad
	E=\int_0^1 e^{\beta u}k^2(u)\,du.
	\]
	
	Then (\ref{hexp})--(\ref{kexp}) become
	\begin{equation}\label{hks}
	h(t)=\frac{A e^{\beta t}+B e^{-\beta t}}{D},
	\qquad
	k(t)=\frac{C e^{-\beta t}+E e^{\beta t}}{D}.
	\end{equation}
	
	Thus every solution belongs to the finite-dimensional space
	\[
	\operatorname{span}\{e^{\beta t},e^{-\beta t}\}.
	\]
	
	Substituting these expressions into the definitions of
	\(A,B,C,E\) yields a closed algebraic system.
	
	For convenience, introduce
	\begin{equation}\label{Im}
	I_m=\int_0^1 e^{m\beta u}\,du
	=
	\frac{e^{m\beta}-1}{m\beta},
	\qquad m\neq0, \beta>0.
	\end{equation}
	
	Since
	\[
	h^2(u)
	=
	\frac{
		A^2 e^{2\beta u}
		+
		2AB
		+
		B^2 e^{-2\beta u}
	}{D^2}, \ \ 
	k^2(u)
	=
	\frac{
		C^2 e^{-2\beta u}
		+
		2CE
		+
		E^2 e^{2\beta u}
	}{D^2},
	\]
the original nonlinear integral system reduces to the finite-dimensional algebraic system
	\begin{equation}\label{ABCE}\begin{array}{llll}
	A=
	\frac{
		A^2 I_3
		+
		2AB I_1
		+
		B^2 I_{-1}
	}{(C+E)^2},
	\\[2mm]
	
	B=
	\frac{
		C^2 I_{-3}
		+
		2CE I_{-1}
		+
		E^2 I_1
	}{(C+E)^2},
	\\[2mm]

	C=
	\frac{
		A^2 I_1
		+
		2AB I_{-1}
		+
		B^2 I_{-3}
	}{(C+E)^2},
	\\[2mm]
	
	E=
	\frac{
		C^2 I_{-1}
		+
		2CE I_1
		+
		E^2 I_3
	}{(C+E)^2}.
	\end{array}
	\end{equation}
	
	Therefore the problem of finding translation-invariant Gibbs measures for the kernel given by 
	\(\eta_{tu}=e^{\beta(t+u)}\) is reduced to solving a system of four algebraic equations in the four unknowns \(A,B,C,E\).

{\bf Case: 	$A=E, \,  B=C $.}	Assume that
	\[
	A=E, \,  B=C \ \ {\rm then} \ \ 
		D=C+E=A+B.
	\]
	Therefore the system (\ref{ABCE}) reduces to
		\[
	A=
	\frac{
		A^2 I_3
		+2AB I_1
		+B^2 I_{-1}
	}{(A+B)^2},
	\qquad
	B=
	\frac{
		A^2 I_1
		+2AB I_{-1}
		+B^2 I_{-3}
	}{(A+B)^2}.
	\]
	
Let $A=uB,$ for some $u>0$. Substituting into the first equation gives
\[
B=
\frac{
u^2I_3+2uI_1+I_{-1}
}
{u(u+1)^2}.
\]

The second equation gives
\[
B=
\frac{
u^2I_1+2uI_{-1}+I_{-3}
}
{(u+1)^2}.
\]

Equating these two values of $B$ we get 
\[
I_1u^3
+
(2I_{-1}-I_3)u^2
+
(I_{-3}-2I_1)u
-
I_{-1}
=0 .
\]

Now use $
I_m=\frac{t^m-1}{m\ln t}$, where $t=e^\beta 
$ and \(t-1\ne 0\) to obtain the cubic equation
\begin{equation}\label{cub}
P_t(u):=3t^3u^3
-
t^2(t^3+t^2+t-6)u^2
-
(6t^3-t^2-t-1)u
-
3t^2=0
\end{equation}
\begin{lemma}
For every $t>1$, the equation (\ref{cub})  has exactly one positive real solution $u>0$. 
\end{lemma}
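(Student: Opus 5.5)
The plan is to apply Descartes' rule of signs to the cubic \eqref{cub}, viewed as a polynomial in $u$ with coefficients depending on the parameter $t>1$. Write
\[
P_t(u)=a_3u^3+a_2u^2+a_1u+a_0,
\]
with
\[
a_3=3t^3,\qquad a_2=-t^2\,(t^3+t^2+t-6),\qquad a_1=-(6t^3-t^2-t-1),\qquad a_0=-3t^2 .
\]
Descartes' rule says that the number of positive roots, counted with multiplicity, equals the number of sign changes in the coefficient sequence minus an even number. So if there is exactly one sign change, there is exactly one positive root, and it is simple.

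The key steps, in order, are these.

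\begin{enumerate}
\item For $t>1$ we have $a_3=3t^3>0$ and $a_0=-3t^2<0$.
\item The coefficient $a_1$ is negative for $t>1$. Indeed $6t^3-t^2-t-1\ge 6t^3-3t^3=3t^3>0$, since $t^2,t,1\le t^3$ when $t\ge 1$.
\item The sign of $a_2$ is not determined for $t>1$. The factor $t^3+t^2+t-6$ equals $-3$ at $t=1$ and is positive for large $t$, so $a_2$ can be positive, zero, or negative depending on $t$. This is the only potential obstacle.
\end{enumerate}

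It is handled by listing the possible sign patterns of $(a_3,a_2,a_1,a_0)$:
\[
(+,+,-,-),\qquad (+,0,-,-),\qquad (+,-,-,-).
\]
In the middle case the zero coefficient is discarded before counting. Each pattern has exactly one sign change. Hence Descartes' rule gives exactly one positive root, and it is simple, for every $t>1$.

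As an independent check of existence, $P_t(0)=-3t^2<0$ and $P_t(u)\to+\infty$ as $u\to+\infty$. The intermediate value theorem therefore gives at least one positive root, consistent with the count above.
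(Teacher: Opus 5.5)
Your proof is correct and follows essentially the same route as the paper. Like the paper, you show that $a_3>0$, $a_1<0$ and $a_0<0$, that only $a_2$ changes sign (at the root $t_0\approx 1.3487$ of $t^3+t^2+t-6$), that Descartes' rule then gives exactly one sign change, and you use the intermediate value theorem for existence. Your version is slightly more complete: you justify $6t^3-t^2-t-1>0$ explicitly, and you treat the boundary case $a_2=0$ (i.e.\ $t=t_0$), which the paper's split into $1<t<t_0$ and $t>t_0$ leaves out.
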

\begin{proof}
We study the sign structure and apply Descartes' rule\footnote{Descartes rule states (see \cite{Pra}) that if the nonzero terms of a single-variable polynomial with real coefficients are ordered by descending variable exponent, then the number of positive roots of the polynomial is either equal to the number of sign changes between consecutive (nonzero) coefficients, or is less than it by an even number. A root of multiplicity $n$ is counted as $n$ roots. 	In particular, if the number of sign changes is zero or one, the number of positive roots equals the number of sign changes.} of signs together with basic continuity arguments. 
We evaluate
\[
P_t(0)=-3t^2<0,
\qquad
\lim_{u\to+\infty}P_t(u)=+\infty
\quad (\text{since } 3t^3>0).
\]
By continuity, there exists at least one $u>0$ such that $P_t(u)=0$. 
Hence the number of positive roots is at least one in for every $t>1$.

Now study sign structure of coefficients. Write
\begin{equation}\label{Pt}
P_t(u)=au^3+bu^2+cu+d,
\end{equation}
where
$$a=3t^3>0, \ \ b=-t^2(t^3+t^2+t-6), \ \ c=-(6t^3-t^2-t-1), \ \ d=-3t^2<0.$$
For $t>1$, we have
\[
6t^3-t^2-t-1>0,
\]
hence $c<0$ for all $t>1$.

Define
\[
\phi(t)=t^3+t^2+t-6.
\]
Then $\phi(t)$ is strictly increasing for $t>0$ and has a unique root
\[
t_0\approx 1.34872.
\]
Thus:
\[
b>0 \text{ for } 1<t<t_0,\qquad b<0 \text{ for } t>t_0.
\]

 We examine sign variations of $(a,b,c,d)$.

Case 1: $1<t<t_0$. Then
\[
(a,b,c,d)=(+,+,-,-),
\]
which has exactly one sign change. 

Case 2: $t>t_0$. Then
\[
(a,b,c,d)=(+,-,-,-),
\]
again giving exactly one sign change.  Hence the number of positive roots is at most one. Since at least one exists, there is exactly one positive root for each $t>1$.
\end{proof}
Denote by $u_*$ the unique solution of (\ref{cub}). Then, the  corresponding solution
of (\ref{ABCE}) is given by $S_*:=(u_*, 1, 1, u_*)B_*$ with 
\[
B_*=
\frac{
u_*^2I_1+2u_*I_{-1}+I_{-3}
}
{(u_*+1)^2}.
\]
The corresponding symmetric solution of integral equation is 
\begin{equation}\label{fb}	f(t)=g(t)=h^2(t)=\left(\frac{u_* e^{\beta t}+ e^{-\beta t}}{1+u_*}\right)^2,
		\end{equation}
We summarize the result here
\begin{rt}\label{res1} Independently on values of $\beta>0$ (i.e., of $t>1$) there exists a (symmetric) translation-invariant Gibbs measure $\mu_*$ corresponding to  solution $S_*$.
\end{rt}
	
{\bf Case:
	$A\ne E$ and $B\ne C$}. Above we only solved the system  (\ref{ABCE}), for \(A=E\) and \(B=C\). To see the case when these conditions are not satisfied we set
\[
D=C+E>0,
\qquad
X=A-E,
\qquad
Y=B-C.
\]

Subtracting the fourth equation of (\ref{ABCE}) from the first yields
\[
\begin{aligned}
	XD^2
	&=
	A^2I_3-E^2I_3
	+2I_1(AB-CE)
	+I_{-1}(B^2-C^2)
	\\
	&=
	I_3(A-E)(A+E)
	+2I_1\bigl[B(A-E)+E(B-C)\bigr]
	\\
	&\qquad
	+I_{-1}(B-C)(B+C).
\end{aligned}
\]
Equivalently,
\begin{equation}\label{eq1}
	\Bigl(D^2-\alpha\Bigr)X-\beta Y=0,
\end{equation}
where
\[
\alpha=I_3(A+E)+2I_1B,
\qquad
\beta=2I_1E+I_{-1}(B+C).
\]

Similarly, subtracting the third equation from the second gives

\begin{equation}\label{eq2}
	\gamma X+\Bigl(D^2+\delta\Bigr)Y=0,
\end{equation}
where
\[
\gamma=I_1(A+E)+2I_{-1}B,
\qquad
\delta=I_{-3}(B+C)+2I_{-1}E.
\]

Equations \eqref{eq1}--\eqref{eq2} form a homogeneous linear system
\begin{equation}\label{Ds}
	\begin{pmatrix}
		D^2-\alpha & -\beta\\
		\gamma & D^2+\delta
	\end{pmatrix}
	\binom{X}{Y}
	=
	\binom00 .
\end{equation}

\begin{rk} By definition, we have \(A,B,C,D,E>0\) and \(I_m>0\) for \(m=\pm1,\pm3\). Therefore, \(\gamma>0\) and \(D^2+\delta>0\). Consequently,  it follows from from \eqref{eq2} that \(X=0\) if and only if \(Y=0\).
	
	It remains to consider the case \(X\neq 0\) and \(Y\neq 0\), which is possible only if the determinant of the system \eqref{Ds},
	\[
	\Delta = (D^2-\alpha)(D^2+\delta)+\beta\gamma,
	\]
	vanishes. Below, we show that, under suitable conditions on $\beta$, the case \(\Delta=0\) is possible and solutions with $A\ne E$ and $B\ne C$ exists.

 \end{rk}

 Denote ${A\over B}=u>0$ and ${C\over E}=v>0$.
We assume that $uv\ne 1$, because, the  case $uv=1$ is equivalent to $A=E$, $B=C$. 
	
Substituting 
$
A=uB, \, C=vE
$ into the original system yields equations for \(B\) and \(E\):

\begin{equation}\label{BE}
E=\frac{v^2I_{-1}+2vI_1+I_3}{(v+1)^2}, \ \ 
B=\frac{v^2I_{-3}+2vI_{-1}+I_1}{(v+1)^2}.
\end{equation}

In the first equation of (\ref{ABCE}) 
substituting \(A=uB\) and \(C=vE\), we obtain
\[
uB=\frac{B^2\bigl(u^2I_3+2uI_1+I_{-1}\bigr)}{E^2(v+1)^2}.
\]

Cancelling \(B>0\), we get
\begin{equation}\label{BB}
B=
u\,E^2(v+1)^2\Big/\bigl(u^2I_3+2uI_1+I_{-1}\bigr).
\end{equation}

From the third equation of (\ref{ABCE}), 
substituting \(A=uB\) and \(C=vE\), we obtain
\begin{equation}\label{EE}
C=vE=\frac{B^2\bigl(u^2I_1+2uI_{-1}+I_{-3}\bigr)}{E^2(v+1)^2}.
\end{equation}
Now substituting the expression for \(E\), and $B$ given in (\ref{BE}) to (\ref{BB}) and (\ref{EE}), 
we obtain system of equations for $u$ and $v$:
\begin{equation}\label{uuvv}	\begin{aligned}
	u &= \frac{(v^2 I_{-3} + 2v I_{-1} + I_1)(u^2 I_3 + 2u I_1 + I_{-1})}{(v^2 I_{-1} + 2v I_1 + I_3)^2}, \\
	v &= \frac{(v^2 I_{-3} + 2v I_{-1} + I_1)^2 (u^2 I_1 + 2u I_{-1} + I_{-3})}{ (v^2 I_{-1} + 2v I_1 + I_3)^3}.
\end{aligned}
\end{equation}
where $I_m$, is given in (\ref{Im}) and $t>1$. Setting  $S=t^{2}+t+1$, we obtain
\begin{equation*}
I_{1}=\frac{t-1}{\beta },\quad I_{-1}=\frac{t-1}{\beta t},\quad I_{3}=\frac{%
(t-1)S}{3\beta },\quad I_{-3}=\frac{(t-1)S}{3\beta t^{3}}.
\end{equation*}%
Then, by replacing $I_{\pm 1}$ and \  $I_{\pm 3}$ in (\ref{uuvv}) we get

\begin{align}
u& =\frac{A(v,t)C(u,t)}{B(v,t)^{2}},  \label{eq:reducida1} \\
v& =\frac{A(v,t)^{2}D(u,t)}{B(v,t)^{3}},  \label{eq:reducida2}
\end{align}%
with  
\begin{align*}
A& :=A(v,t)=1+\frac{2v}{t}+\frac{S}{3t^{3}}v^{2}, \\
B& :=B(v,t)=\frac{S}{3}+2v+\frac{v^{2}}{t}, \\
C& :=C(u,t)=\frac{1}{t}+2u+\frac{S}{3}u^{2}, \\
D& :=D(u,t)=\frac{S}{3t^{3}}+\frac{2u}{t}+u^{2}.
\end{align*}

Therefore,$\allowbreak $  
\begin{equation*}
uB^{2}-AC=0,\qquad vB^{3}-A^{2}D=0.
\end{equation*}%
Since $C=\frac{1}{t}+2u+\frac{Su^{2}}{3}$ and $uB^{2}-AC=0,$ we have $%
uB^{2}-A\left( \frac{1}{t}+2u+\frac{Su^{2}}{3}\right) =0,$ so that 
\begin{equation*}
\frac{SA}{3}u^{2}+\left( 2A-B^{2}\right) u+\frac{A}{t}=0.
\end{equation*}%
Let $a_{1}=\frac{SA}{3};$ $b_{1}=2A-B^{2}$ and $c_{1}=\frac{A}{t}.$ Then,%
\begin{equation*}
a_{1}u^{2}+b_{1}u+c_{1}=0.
\end{equation*}%
Similarly, since $vB^{3}-A^{2}D=0$ and $D=\frac{S}{3t^{3}}+\frac{2u}{t}+u^{2}
$ we have $vB^{3}-A^{2}\left( \frac{S}{3t^{3}}+\frac{2u}{t}+u^{2}\right) =0,$
so that 
\begin{equation*}
vB^{3}-\frac{A^{2}S}{3t^{3}}-\frac{2A^{2}}{t}u-A^{2}u^{2}=0.
\end{equation*}%
Thus, $A^{2}u^{2}+\frac{2A^{2}}{t}u+\left( \frac{A^{2}S}{3t^{3}}%
-vB^{3}\right) =0$ and setting%
\begin{equation*}
a_{2}=A^{2},\qquad b_{2}=\frac{2A^{2}}{t},\qquad c_{2}=\frac{A^{2}S}{3t^{3}}%
-vB^{3},
\end{equation*}%
we have%
\begin{equation*}
a_{2}u^{2}+b_{2}u+c_{2}=0.
\end{equation*}%
We conclude that the solutions are derived from the equations
\begin{equation}
a_{1}u^{2}+b_{1}u+c_{1}=0\text{ and \ }a_{2}u^{2}+b_{2}u+c_{2}=0\text{ }
\label{sistemas}
\end{equation}%
This system can be written as 
\begin{equation*}
\left( 
\begin{array}{cc}
a_{1} & a_{2} \\ 
b_{1} & b_{2}%
\end{array}%
\right) \left( 
\begin{array}{c}
u^{2} \\ 
u%
\end{array}%
\right) =\left( 
\begin{array}{c}
-c_{1} \\ 
-c_{2}%
\end{array}%
\right) ,
\end{equation*}

with 
\begin{align*}
a_{1}& =\frac{SA}{3}, & b_{1}& =2A-B^{2}, & c_{1}& =\frac{A}{t}, \\
a_{2}& =A^{2}, & b_{2}& =\frac{2A^{2}}{t}, & c_{2}& =\frac{A^{2}S}{3t^{3}}%
-vB^{3}.
\end{align*}

We claim that $a_{1}b_{2}-b_{1}a_{2}>0.$ Indeed,%
\begin{equation*}
a_{1}b_{2}-a_{2}b_{1}=\frac{SA}{3}\frac{2A^{2}}{t}-A^{2}(2A-B^{2})=A^{2}%
\left[ B^{2}+2A\left( \frac{S}{3t}-1\right) \right] .
\end{equation*}

Since $S=t^{2}+t+1,$ we have that $\frac{S}{3t}=\frac{t^{2}+t+1}{3t}=\frac{%
t+1+\frac{1}{t}}{3}.$ Moreover $t>1$, and hence $t+1+\frac{1}{t}>3,$ so that 
$\frac{S}{3t}-1>0,$ and the claim follows. 

Consequently (\ref{sistemas}) has a unique solution that can be written as  
has a unique solution  
\begin{equation*}
\left( 
\begin{array}{c}
u^{2} \\ 
u%
\end{array}%
\right) =\left( 
\begin{array}{cc}
\frac{b_{2}}{a_{1}b_{2}-a_{2}b_{1}} & -\frac{a_{2}}{a_{1}b_{2}-a_{2}b_{1}}
\\ 
-\frac{b_{1}}{a_{1}b_{2}-a_{2}b_{1}} & \frac{a_{1}}{a_{1}b_{2}-a_{2}b_{1}}%
\end{array}%
\right) \left( 
\begin{array}{c}
-c_{1} \\ 
-c_{2}%
\end{array}%
\right) =\allowbreak \left( 
\begin{array}{c}
\frac{a_{2}c_{2}-b_{2}c_{1}}{a_{1}b_{2}-a_{2}b_{1}} \\ 
\frac{b_{1}c_{1}-a_{1}c_{2}}{a_{1}b_{2}-a_{2}b_{1}}%
\end{array}%
\right) .
\end{equation*}%
if and only if the following consistence condition is fulfilled%
\begin{equation*}
\left( \frac{b_{1}c_{1}-a_{1}c_{2}}{a_{1}b_{2}-a_{2}b_{1}}\right) ^{2}=\frac{%
a_{2}c_{2}-b_{2}c_{1}}{a_{1}b_{2}-a_{2}b_{1}},
\end{equation*}%
or equivalently%
\begin{equation*}
(b_{1}c_{1}-a_{1}c_{2})^{2}=(a_{1}b_{2}-a_{2}b_{1})(a_{2}c_{2}-b_{2}c_{1}).
\end{equation*}

$\allowbreak $Now define 
\begin{equation}
F(v,t)=(a_{2}c_{1}-a_{1}c_{2})^{2}-(a_{1}b_{2}-a_{2}b_{1})(b_{1}c_{2}-b_{2}c_{1}).
\end{equation}%
Then the compatibility condition is 
\begin{equation}
F(v,t)=0.
\end{equation}
By a computer one gets the factorization of $F(v,t)$ 
\begin{equation}\label{factor}
F(v,t)={1\over 81 t^8}Y(v,t)Q_3(v,t)Q_8(v,t),
\end{equation}
where
\begin{eqnarray}
Y(v,t)&=&(3v^2+6tv+t^3+t^2+t)^2>0.\\
Q_3(v,t)&=&-v^3P_t(1/v)=3t^{2}v^{3}+(6t^{3}-t^{2}-t-1)v^{2}+(t^{5}+t^{4}+t^{3}-6t^{2})v-3t^{3}
\end{eqnarray}
\[
\begin{aligned}
Q_8(v,t)=\;&-243t^{5}v^{8}
+\left(27t^{10}+54t^{9}+81t^{8}+54t^{7}-1917t^{6}\right)v^{7}\\
&+\left(162t^{11}+333t^{10}+513t^{9}+54t^{8}
-5985t^{7}-54t^{6}+189t^{5}+225t^{4}-54t^{3}\right)v^{6}\\
&+\left(27t^{13}+405t^{12}+846t^{11}+1323t^{10}
-810t^{9}-9117t^{8}-1053t^{7}
+2430t^{6}+576t^{5}-216t^{4}\right)v^{5}\\
&+\left(108t^{14}+546t^{13}+1140t^{12}+1653t^{11}
-2372t^{10}-6745t^{9}-3660t^{8}
+8577t^{7}\right.\\
&\qquad\left.
-300t^{6}-150t^{5}-18t^{4}-6t^{3}
+24t^{2}-10t-2\right)v^{4}\\
&+\left(9t^{16}+144t^{15}+426t^{14}+876t^{13}
+831t^{12}-2112t^{11}-2694t^{10}\right.\\
&\qquad\left.
-3432t^{9}+11481t^{8}-948t^{7}
+204t^{6}-456t^{5}+408t^{4}
-96t^{3}-24t^{2}\right)v^{3}\\
&+\left(18t^{17}+73t^{16}+185t^{15}+357t^{14}
-12t^{13}-411t^{12}-1515t^{11}\right.\\
&\qquad\left.
+873t^{10}+4656t^{9}+3021t^{8}
-2839t^{7}+1333t^{6}
-18t^{5}-120t^{4}-12t^{3}\right)v^{2}\\
&+\left(t^{19}+5t^{18}+15t^{17}+36t^{16}
+33t^{15}+3t^{14}-111t^{13}\right.\\
&\qquad\left.
-216t^{12}+903t^{11}+785t^{10}
+2581t^{9}-2550t^{8}
+1476t^{7}-216t^{6}-72t^{5}\right)v\\
&-18t^{14}+54t^{13}+90t^{12}+144t^{11}
+486t^{10}-558t^{9}+360t^{8}
-54t^{7}-18t^{6}.
\end{aligned}
\]

Since $Y(v,t)>0$, we should have $Q_3=0$ or $Q_8=0$. The case $Q_3=0$ gives symmetric solutions which were discussed by roots of $P_t(1/v)$, $u=1/v$ defined by (\ref{Pt}). So it remains to solve equation $Q_8(v,t)=0$. 

\begin{lemma} For \(t>1\), let  \(N_+(t)\) denotes the number of positive roots of the 
equation \(Q_8(v,t)=0\). Then,  $N_+(t)\in\{1, 2\}.$
\end{lemma}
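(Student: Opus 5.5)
The plan is to combine Descartes' rule of signs, as in the proof of the preceding Lemma on $P_t$, with an intermediate-value argument. Both are applied to $Q_8(\cdot,t)$ viewed as a polynomial of degree $8$ in $v$ with coefficients $c_8(t),\dots,c_0(t)$. The leading coefficient is $c_8(t)=-243t^5<0$, so $Q_8(v,t)\to-\infty$ as $v\to+\infty$. The constant term is
\[
c_0(t)=t^6\bigl(-18t^8+54t^7+90t^6+144t^5+486t^4-558t^3+360t^2-54t-18\bigr).
\]
At $t=1$ the bracket equals $486>0$, and it tends to $-\infty$ as $t\to\infty$. I would first show, by a Sturm-sequence or derivative argument on this degree-$8$ polynomial, that it has exactly one root $t_1>1$. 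Then $c_0>0$ on $(1,t_1)$ and $c_0<0$ on $(t_1,\infty)$.

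The second step is to determine the sign pattern of $(c_8,c_7,\dots,c_0)$ for every $t>1$. Each $c_j(t)$ is an explicit polynomial in $t$. For each $j$ I would locate its (at most one) root in $(1,\infty)$ by the same kind of monotonicity argument used for $\phi(t)=t^3+t^2+t-6$ in the previous Lemma. For large $t$, for example, $c_7,\dots,c_1$ are all positive. I would then cut $(1,\infty)$ into finitely many intervals on which every sign is constant.

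The goal is to show that on each interval the sequence has at most two sign changes. The intended shape is a negative leading coefficient, then a block of positive coefficients possibly preceded by some negative ones, and then $c_0$. If $c_0>0$ this gives exactly one sign change, and if $c_0<0$ it gives two. I expect this bookkeeping near $t=1$ to be the main obstacle. There $c_7(t)=27t^{10}+\dots-1917t^6$ and several middle coefficients are negative, so I must check that the negative terms stay contiguous with $c_8$ and do not create extra alternations. I would try to verify this by exact evaluation at the interval endpoints, together with a monotonicity bound for each coefficient on each interval. If this is too coarse somewhere, I would substitute $v=t^2w$, suggested by the degree scaling of the coefficients, and redo the count in $w$.

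Once the sign-change count is established, the conclusion splits into two cases. For $1<t<t_1$ there is exactly one sign change, so Descartes' rule gives $N_+(t)=1$; this matches $Q_8(0,t)=c_0>0$ and $Q_8(+\infty,t)<0$. At $t=t_1$, $v=0$ is a root, the remaining positive coefficients give one sign change, and so $N_+=1$.

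For $t>t_1$ there are two sign changes, so $N_+(t)\in\{0,2\}$, and it remains to rule out $0$. Here $Q_8(0,t)<0$ and $Q_8(+\infty,t)<0$, so it suffices to find some $v_0(t)>0$ with $Q_8(v_0,t)>0$; the intermediate value theorem then gives a root on each side of $v_0$. I would try $v_0=t^2$ or $v_0=t$ first. For $v_0=t^2$ I would expand $Q_8(t^2,t)$ as a polynomial in $t$, identify its leading term, and verify positivity for $t>t_1$ by grouping terms. This gives $N_+(t)\in\{1,2\}$ for all $t>1$.
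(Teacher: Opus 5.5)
Your plan is essentially the paper's proof. The paper tabulates the signs of the coefficients $k_i(t)$ using critical values $t_{4,c}<t_{5,c}<t_{6,c}<t_{7,c}<t_{0,c}$, so the negative coefficients stay contiguous with $k_8$ and only $k_0$ moves the count from one sign change to two; for $t>t_{0,c}$ it rules out zero roots with the simpler test point $v_0=1$, via $Q_8(1,t)=(t-1)R_{18}(t)>0$, rather than your $v_0=t^2$ or $v_0=t$. Two small remarks: your fallback substitution $v=t^2w$ cannot change any Descartes count, since rescaling $v$ by a positive factor preserves every coefficient sign; and your separate treatment of $t=t_{0,c}$ (where $v=0$ is a root and exactly one positive root remains) is more careful than the paper's proof, which does not address that endpoint.
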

\begin{proof} Write $Q_8(v,t)$ as $Q_8(v,t)=\sum_{i=0}^8k_i(t)v^i$. 
We study the signs of the coefficients \(k_i(t)\) and then apply Descartes' rule of signs. Since each \(k_i(t)\), \(i=0,\ldots,8\), is an explicit polynomial in \(t\), its zeros can be computed. In particular, the relevant critical values are $t_{i,c}$, $i=0,4,5,6,7$ where 
$$t_{4,c}\approx 1.351, \ \ t_{5,c}\approx 1.613, \ \ t_{6,c}\approx 1.857, \ \ t_{7,c}\approx 2.281, \ \ t_{0,c}\approx 4.6545$$ and the  following table is obtained.

\begin{table}[h!]
	\centering
	\begin{tabular}{|c|c|c|}
		\hline
		Interval in $t$ & Sign of $(k_8(t), \dots, k_0(t))$ & \(N_+(t)\) \\ 
		\hline
		$(1, t_{4,c})$ & $(-,-,-,-,-,+,+,+,+)$ & 1\\ 
		\hline
		$(t_{4,c}, t_{5,c})$ & $(-,-,-,-,+,+,+,+,+)$&  1 \\ 
		\hline
		$(t_{5,c}, t_{6,c})$ & $(-,-,-,+,+,+,+,+,+)$&  1 \\ 
		\hline
		$(t_{6,c}, t_{7,c})$ & $(-,-,+,+,+,+,+,+,+)$ &  1\\ 
        \hline
		$(t_{7,c}, t_{0,c})$ & $(-,+,+,+,+,+,+,+,+)$& 1 \\ 
		\hline
		$(t_{0,c}, +\infty)$ & $(-,+,+,+,+,+,+,+,-)$&  2 \\ 
		\hline
	\end{tabular}
\end{table} 

For the case $t\in(1,t_{0,c})$, we have $
k_8(t)<0,\, k_0(t)>0.$
Therefore, $ Q_8(0,t)=:\lim_{v\to 0} Q_8(v,t)>0$ and $\lim_{v\to +\infty} Q_8(v,t)<0$. 
Hence, by the intermediate value theorem, there exists at least one
positive solution $v>0$ of the equation $Q_8(v,t)=0$. Moreover, applying Descartes' rule of signs, we conclude that
$Q_8(v,t)$ has exactly one positive root (in case $t\in(1,t_{0,c})$).

For the case $t>t_{0,c}$, we have $Q_8(0,t)<0$. Moreover, $Q_8(1,t)>0$. Indeed, 
the factorization of \(Q_8(1,t)\) is $Q_8(1,t)=(t-1)R_{18}(t)$,
where 
\[
\begin{aligned}
R_{18}(t)=\;&t^{18}+6t^{17}+39t^{16}+157t^{15}
+519t^{14}+1395t^{13}\\
&+2775t^{12}+4614t^{11}+4695t^{10}
+3456t^9\\
&-285t^8-615t^7-1387t^6
+75t^5\\
&-99t^4+180t^3+12t^2+12t+2 .
\end{aligned}
\]

It is easy to see that $R_{18}(t)>0$, for $t>t_{0,c}$ because the
high-degree positive terms dominate the few negative coefficients.

Therefore, by the intermediate value theorem, there exists at least one positive root of
$Q_8(v,t)=0$ in the interval $(0,1)$. Moreover, by Descartes' rule of signs, the
polynomial has at most two positive roots (counting multiplicities). Since we have
already established the existence of one positive root, the possibility of having
zero positive roots is excluded. Hence, the only possible number of positive roots is
two. See Figure \ref{fig:toy-octic} for the visualization of bifurcation.
\end{proof}

Here some numerical analysis for positive roots of $Q_8(v,t)$:
\begin{itemize}
\item for $t=1.63$ one solution, namely  $v\approx 2.623$, 
\item for $t=2$ one solution, namely  $v\approx 7.315$, 
\item for critical $t=t_{0.c}$ we have {\bf two} solutions $v_1\approx 0.00000025$ and $v_2\approx 376.98$.
\item for $t=5$  there are two solutions $v\approx0.0002, v\approx524.21$
\end{itemize}

\begin{figure}[t]\centering
\includegraphics[width=.6\linewidth]{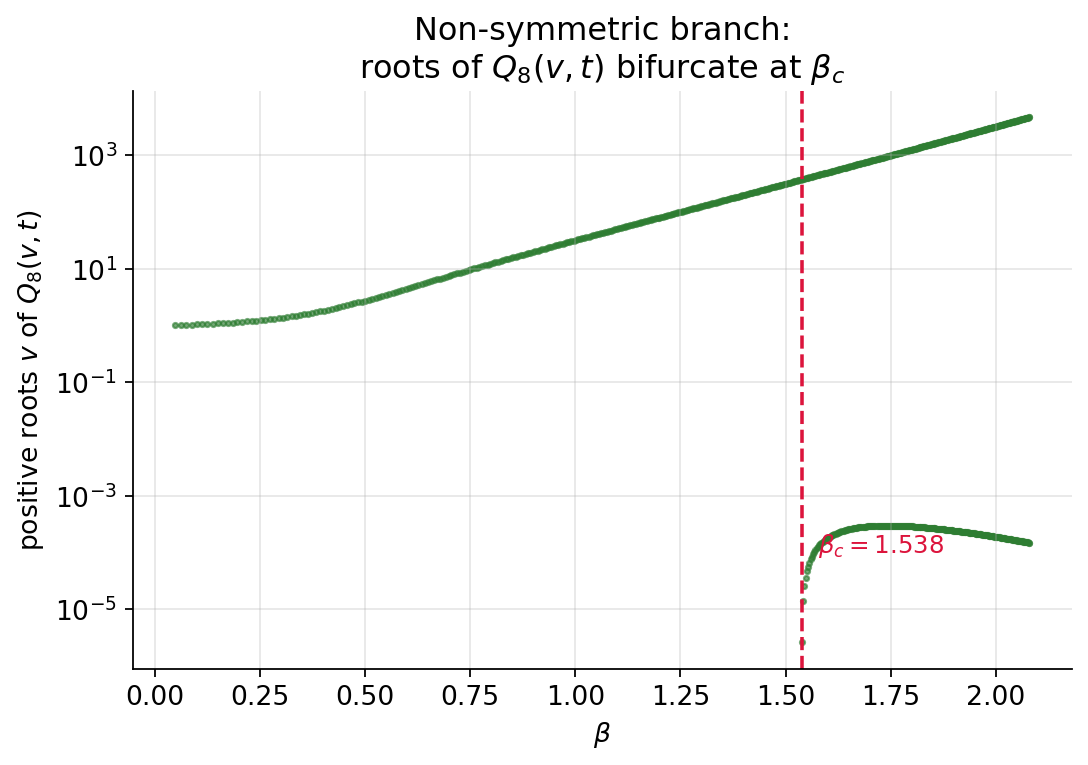}
\caption{Positive roots $v$ of the octic $Q_8(v,t)$ versus $\beta$. A second
branch appears at $\beta_c$, the mechanism behind the $2\to3$ jump.}
\label{fig:toy-octic}
\end{figure}

Consequently, we obtain the following result.

\begin{rt}\label{res2}
Let \(\beta=\ln t>0\) and $\beta_c=\ln(t_{0,c})$. The number \(n_G(\beta)\) of non-symmetric
translation-invariant Gibbs measures corresponding to the Hamiltonian
(\ref{uy1}) is given by
\[
n_G(\beta)=
\begin{cases}
1, & \beta<\beta_c,\\[2mm]
2, & \beta\geq \beta_c.
\end{cases}
\]
\end{rt}

Thus, by Results \ref{res1} and \ref{res2}, we conclude that the number of translation-invariant Gibbs measures is 
equal to \(2\) for \(\beta<\beta_c\) and equal to \(3\) for \(\beta\geq \beta_c\).

For given solution $(u,v)$ with $uv\ne 1$, the non-symmetric solution of the integral equation 
found by formulas (\ref{hk}) and (\ref{hks}): 
\begin{equation}\label{ft}
	f(t)=\left(
\frac{
v^2I_{-3}+2vI_{-1}+I_1
}
{
v^2I_{-1}+2vI_1+I_3
}
\right)^2\cdot
\left({ue^{\beta t}+e^{-\beta t}\over v+1}
\right)^2, \ \ 
	g(t)	=
			\left(
		{v e^{-\beta t}+e^{\beta t}\over 
		v+1}\right)^2.
\end{equation}

\section{Prediction problems of Machine Learning with respect to Gibbs measures.}
In this section, for some of the translation-invariant measures $\mu$ 
constructed above, we calculate, for any finite volume $V_n$, with $n\geq 1$, 
the probability (with respect to the measure $\mu$) of observing the system 
in the state $(\sigma_n,\varphi_n)$, where $\sigma_n$ denotes the parameter 
configuration and $\varphi_n$ denotes the spin configuration. Since a Cayley 
tree is cofinal, it is sufficient to consider the volumes $V_n$ instead of 
arbitrary finite volumes $\Lambda$; see \cite{Ge}.

By our construction this probability, for Gibbs measure $\mu$, is computed by the formula (\ref{muu}) and reduced to compute 
$\mu^{(n)}(\sigma_n, \varphi_n)$ given by (\ref{uy2}).

We do this computations for the measures constructed in Subsection \ref{k2a}.

{\bf Case $\mu_*$:} Consider the translation-invariant solution (\ref{fb}) and corresponding symmetric measure $\mu_*$.
By the definition of the boundary functions,
\[
f(t,x)=\exp(h_{t,+,x}-h_{0,-,x}),
\qquad
g(t,x)=\exp(h_{t,-,x}-h_{0,-,x}),
\]
and since \(f=g\), we obtain
\[
h_{t,+,x}-h_{0,-,x}
=
h_{t,-,x}-h_{0,-,x}
=
\ln f(t).
\]

Therefore,
\[
h_{t,\pm,x}-h_{0,-,x}
=
2\ln
\left(
\frac{u_*e^{\beta t}+e^{-\beta t}}
{1+u_*}
\right).
\]

The finite-volume Gibbs distribution is given (\ref{uy2}) and for the kernel
$\eta_{tu}=e^{\beta(t+u)}$, 
the interaction function is
$
\xi(t,u)=t+u,
$
and hence
 the Hamiltonian
\[
H(\sigma_n,\varphi_n)
=
-\sum_{\langle x,y\rangle\in L_n}
\big(\sigma(x)+\sigma(y)\big)\varphi(x)\varphi(y).
\]

Hence (absorbing constants into \(Z_n\)):
$$
\sum_{x\in W_n} h_{\sigma(x),\varphi(x),x}
=
2\sum_{x\in W_n}
\ln\left(
\frac{u_* e^{\beta \sigma(x)}+e^{-\beta \sigma(x)}}{1+u_*}
\right).$$

Therefore the Gibbs measure becomes
$$
\mu^{(n)}_*(\sigma_n,\varphi_n) =
\frac{1}{Z_n}
\exp\Bigg[
\beta \sum_{\langle x,y\rangle\in L_n}
\big(\sigma(x)+\sigma(y)\big)\varphi(x)\varphi(y)
+
2\sum_{x\in W_n}
\ln\left(
\frac{u_* e^{\beta \sigma(x)}+e^{-\beta \sigma(x)}}{1+u_*}
\right)
\Bigg]=$$
$$
\frac{1}{Z_n}
\prod_{\langle x,y\rangle\in L_n}
\exp\Big(\beta(\sigma(x)+\sigma(y))\varphi(x)\varphi(y)\Big)
\prod_{x\in W_n}
\left(
\frac{u_* e^{\beta \sigma(x)}+e^{-\beta \sigma(x)}}{1+u_*}
\right)^2=
$$
\begin{equation}\label{m*}
\frac{1}{Z_n}
\prod_{\langle x,y\rangle\in L_n}
q^{\Big((\sigma(x)+\sigma(y))\varphi(x)\varphi(y)\Big)}
\prod_{x\in W_n}
\left(
\frac{u_* q^{\sigma(x)}+q^{-\sigma(x)}}{1+u_*}
\right)^2,
\end{equation}
where $\sigma(z)\in[0,1]$, $\varphi(z)\in\{-1,+1\}$, $q=e^\beta$.\\

{\bf Prediction rule induced by the symmetric Gibbs measure}. Now we give the rule mentioned in Section \ref{mot}, with respect to measure $\mu_*$.
For any unobserved vertex $x\in V\setminus V_N$, where
$$V_N=\{x\in V:d(x,x^0)\leq N\}$$
is the finite volume containing the observed vertices, the information associated
with the vertex \(x\) can be inferred through the conditional distribution
induced by the Gibbs measure.

Let the observed dataset be

\[
\mathcal D=
\{\varphi(x_1)=y_1,\ldots,\varphi(x_N)=y_N\},
\qquad x_i\in V_N .
\]
 For an unobserved vertex
$x\in V\setminus V_N$, the prediction is defined by the conditional
expectation 
$$\widehat{\varphi}(x)=
\mathbb E_{\mu_*}
\left[
\varphi(x)\mid\mathcal D
\right].
$$

Since $\varphi(x)\in\{-1,+1\},$ we have
$$
\widehat{\varphi}(x)
=
\mu_*(\varphi(x)=+1|\mathcal D)
-
\mu_*(\varphi(x)=-1|\mathcal D).
$$
Recall that the symmetric Gibbs measure (\ref{m*}).

For \(s\in\{-1,+1\}\), define the conditional partition function
\[
Z_x(s|\mathcal D)
=
\int_{\Omega}
\prod_{\langle z,w\rangle\in L_N}
q^{(\sigma(z)+\sigma(w))\varphi(z)\varphi(w)}
\prod_{z\in W_N}
\left(
\frac{u_*q^{\sigma(z)}+q^{-\sigma(z)}}{1+u_*}
\right)^2
\,d\sigma ,
\]

where the integral is taken over all configurations
\(\sigma\in[0,1]^{V_N}\) and the constraints

\[
\varphi(x)=s,
\qquad
\varphi(x_i)=y_i,\quad i=1,\ldots,N
\]
are imposed.

Then
\[
\mu_*(\varphi(x)=s|\mathcal D)
=
\frac{Z_x(s|\mathcal D)}
{Z_x(+1|\mathcal D)+Z_x(-1|\mathcal D)} .
\]

Therefore the Gibbs prediction is
$$
\widehat{\varphi}(x)
=
\frac{
Z_x(+1|\mathcal D)-Z_x(-1|\mathcal D)
}{
Z_x(+1|\mathcal D)+Z_x(-1|\mathcal D)
}=
\frac{
\displaystyle
\int_{\Omega}
\left(W_{+}(\sigma,\varphi)-W_{-}(\sigma,\varphi)\right)\,d\sigma
}{
\displaystyle
\int_{\Omega}
\left(W_{+}(\sigma,\varphi)+
W_{-}(\sigma,\varphi)\right)\,d\sigma
},$$
where
\[
W_{\pm}(\sigma,\varphi)
=
\prod_{\langle z,w\rangle\in L_N}
q^{(\sigma(z)+\sigma(w))\varphi(z)\varphi(w)}
\prod_{z\in W_N}
\left(
\frac{u_*q^{\sigma(z)}+q^{-\sigma(z)}}{1+u_*}
\right)^2.
\]

Thus, the prediction at an unobserved vertex is obtained by integrating over
all possible continuous latent configurations
\(\sigma\in[0,1]^{V_N}\) and averaging the two possible states of the discrete
variable \(\varphi(x)\) according to the Gibbs posterior distribution.

In this sense, the symmetric Gibbs measure defines a Bayesian tree-based
energy model, where the parameter \(u_*\) determines the learned equilibrium
structure and the conditional expectation above provides the probabilistic
prediction rule.\\

{\bf Non-symmetric case.} 
Consider a Gibbs measure constructed by a non-symmetric solution of (\ref{uuvv}),
that is \(uv\neq 1\).

Define
\[
R(v,q)=
\frac{
\frac{v^2}{3}(1-q^{-3}) + 2v(1-q^{-1}) + (q-1)
}{
v^2(1-q^{-1}) + 2v(q-1) + \frac{1}{3}(q^3-1)
}.
\]

Then the non-symmetric boundary laws (\ref{ft}) become
\begin{equation}\label{ft-full-new}
f(t)=
R(v,q)^2
\left(
\frac{u q^{t}+q^{-t}}{v+1}
\right)^2,
\end{equation}

\begin{equation}\label{gt-full-new}
g(t)=
\left(
\frac{v q^{-t}+q^{t}}{v+1}
\right)^2.
\end{equation}
 
The boundary fields satisfy
\[
h_{t,+,x}=\ln f(t)+h_{0,-,x},
\qquad
h_{t,-,x}=\ln g(t)+h_{0,-,x}.
\]

Therefore,
\begin{equation}\label{boundary-sum-new}
\sum_{x\in W_n}h_{\sigma(x),\varphi(x),x}
=
\sum_{x\in W_n}
\Big(
\mathbf{1}_{\{\varphi(x)=+\}}\ln f(\sigma(x))
+
\mathbf{1}_{\{\varphi(x)=-\}}\ln g(\sigma(x))
\Big)
+ C_n,
\end{equation}
where \(C_n\) is absorbed into \(Z_n\).

We compute
\[
\ln f(t)
=
2\ln R(v,q)
+
2\ln\left(\frac{u q^{t}+q^{-t}}{v+1}\right),
\]

\[
\ln g(t)
=
2\ln\left(\frac{v q^{-t}+q^{t}}{v+1}\right).
\]

Substituting into measure gives:

$$
\mu^{(n)}(\sigma_n,\varphi_n)
=$$
$$\frac{1}{Z_n}
\prod_{\langle x,y\rangle\in L_n}
q^{(\sigma(x)+\sigma(y))\varphi(x)\varphi(y)}
\prod_{x\in W_n}\left[
\left(R(v,q)\frac{u q^{\sigma(x)}+q^{-\sigma(x)}}{v+1}\right)^{2\mathbf{1}_{\{\varphi(x)=+\}}}
\left(\frac{v q^{-\sigma(x)}+q^{\sigma(x)}}{v+1}\right)^{2\mathbf{1}_{\{\varphi(x)=-\}}}\right],
$$
where $u,v$ is a solution of (\ref{uuvv}), with $uv\ne 1$.\\ 

{\bf Prediction rule induced by the non-symmetric Gibbs measure.}
In this case,  given data, for an unobserved vertex
$x\in V\setminus V_N$, the prediction is defined by the conditional expectation with respect to the
non-symmetric Gibbs measure \(\mu_{u,v}\): 
$$\widehat{\varphi}(x)
=
\mathbb E_{\mu_{u,v}}
\left[
\varphi(x)\mid\mathcal D
\right] =
\mu_{u,v}(\varphi(x)=+1|\mathcal D)
-
\mu_{u,v}(\varphi(x)=-1|\mathcal D).
$$

For \(s\in\{-1,+1\}\), define the conditional partition function $
Z_x^{u,v}(s|\mathcal D)$ by fixing the value of the hidden variable
\(\varphi(x)=s\) and the observed values
\(\varphi(x_i)=y_i\), \(i=1,\ldots,N\):
\[
\begin{aligned}
Z_x^{u,v}(s|\mathcal D)
&=
\int_{[0,1]^{V_N}}
\prod_{\langle z,w\rangle\in L_N}
q^{(\sigma(z)+\sigma(w))\varphi(z)\varphi(w)}
\\
&\times
\prod_{z\in W_N}
\left[
\left(
R(v,q)
\frac{u q^{\sigma(z)}+q^{-\sigma(z)}}{v+1}
\right)^{2\mathbf 1_{\{\varphi(z)=+\}}}
\right.
\\
&\qquad\qquad\qquad\left.
\times
\left(
\frac{v q^{-\sigma(z)}+q^{\sigma(z)}}{v+1}
\right)^{2\mathbf 1_{\{\varphi(z)=-\}}}
\right]
\,d\sigma.
\end{aligned}
\]
Hence the conditional probability of the hidden state is
\[
\mu_{u,v}(\varphi(x)=s|\mathcal D)
=
\frac{
Z_x^{u,v}(s|\mathcal D)
}{
Z_x^{u,v}(+1|\mathcal D)
+
Z_x^{u,v}(-1|\mathcal D)
}.
\]
Therefore the prediction rule generated by the non-symmetric Gibbs measure is

\[
\widehat{\varphi}(x)
=
\frac{
Z_x^{u,v}(+1|\mathcal D)
-
Z_x^{u,v}(-1|\mathcal D)
}{
Z_x^{u,v}(+1|\mathcal D)
+
Z_x^{u,v}(-1|\mathcal D)
}.
\]

Unlike the symmetric case, the prediction depends on the pair of parameters
$(u,v)$, $uv\neq1,$ through two different boundary contributions:
\[
\left(
R(v,q)
\frac{u q^{\sigma(x)}+q^{-\sigma(x)}}{v+1}
\right)^2
\]
for the state \(\varphi(x)=+1\), and
\[
\left(
\frac{v q^{-\sigma(x)}+q^{\sigma(x)}}{v+1}
\right)^2
\]
for the state \(\varphi(x)=-1\).

Thus the learned prediction depends not only on the observed data but also on
the selected non-symmetric Gibbs state. Different solutions $(u_1,v_1)\neq(u_2,v_2)$
of the consistency equations produce different conditional distributions and
therefore different prediction rules.
\begin{rk}[Gibbs states, phase transition, and machine learning interpretation]

The obtained Gibbs measures are completely determined by the solutions of the
corresponding nonlinear consistency equations. In particular, the parameters
appearing in the boundary laws define different probability distributions on the
configuration space and therefore represent different thermodynamic states of the
system.

In the symmetric case, the solution satisfies
\[
f(t)=g(t),
\]
and the corresponding Gibbs measure $\mu_*$ is determined by a single parameter
$u_*$. This state describes a homogeneous phase in which the two possible
boundary configurations have identical statistical behavior. The resulting
probability distribution represents a balanced equilibrium state where no
distinction is made between the two local states.

In the non-symmetric case, the boundary functions become different:
\[
f(t)\neq g(t).
\]
Consequently, the corresponding Gibbs measure $\mu_{u,v}$ describes a
symmetry-broken phase. The parameters $(u,v)$ act as order parameters of the
system, and every solution of the nonlinear system (\ref{uuvv}) generates a
different Gibbs state:
\[
(u_1,v_1)\neq (u_2,v_2)
\quad\Longrightarrow\quad
\mu_{u_1,v_1}\neq \mu_{u_2,v_2}.
\]

Therefore, the existence of several symmetric and non-symmetric solutions
implies the non-uniqueness of Gibbs measures. In physical terminology, this
corresponds to coexistence of several phases. At the critical value
\[
\beta_c=\ln(t_{0,c}),
\]
the structure of the solution set changes and additional non-symmetric
solutions appear. This emergence of new Gibbs measures represents a phase
transition: below $\beta_c$ the system has a smaller set of stable states,
whereas above $\beta_c$ several competing equilibrium states coexist.

From the viewpoint of machine learning, the Gibbs measure can be interpreted
as a hierarchical energy-based probabilistic model on a tree structure. The
interaction kernel
\[
\xi_{tu}=t+u
\]
represents a similarity rule between two neighboring latent variables. The
Hamiltonian
\[
H(\sigma_n,\varphi_n)
=
-\sum_{\langle x,y\rangle\in L_n}
(\sigma(x)+\sigma(y))\varphi(x)\varphi(y)
\]
describes the propagation of information through the hierarchy, where the
field $\varphi$ acts as a modulation or gating variable controlling the
interaction between neighboring representations.

In this framework, the Gibbs measure induces a Bayesian prediction rule for
unobserved vertices. For a vertex $x\in V\setminus V_N$, the prediction of the
hidden state is given by the conditional expectation
\[
\widehat{\varphi}(x)
=
\mathbb{E}_{\mu}
\big[\varphi(x)\mid \mathcal{D}\big],
\]
where $\mathcal{D}=\{\varphi(x_i)=y_i\}_{i=1}^N$. Since
$\varphi(x)\in\{-1,+1\}$, this can be written as
\[
\widehat{\varphi}(x)
=
\mu(\varphi(x)=+1\mid\mathcal{D})
-
\mu(\varphi(x)=-1\mid\mathcal{D}).
\]
Equivalently, the prediction has the Gibbs form
\[
\widehat{\varphi}(x)
=
\frac{
Z_x(+1\mid\mathcal{D})-Z_x(-1\mid\mathcal{D})
}{
Z_x(+1\mid\mathcal{D})+Z_x(-1\mid\mathcal{D})
},
\]
where $Z_x(\pm1\mid\mathcal{D})$ are the conditional partition functions
obtained by fixing the state $\varphi(x)=\pm1$ and integrating over all
remaining latent variables.

In the symmetric regime, this prediction rule is governed by a single
equilibrium parameter $u_*$, and therefore produces a uniform and stable
posterior structure. In contrast, in the non-symmetric regime the prediction
depends on the full pair $(u,v)$ and hence on the selected Gibbs phase. The
posterior becomes multi-modal, and different solutions $(u,v)$ lead to
different prediction rules:
\[
\widehat{\varphi}_{u_1,v_1}(x)\neq \widehat{\varphi}_{u_2,v_2}(x).
\]

Thus, in machine learning terminology, the symmetric measure corresponds to a
single equilibrium representation, whereas the non-symmetric Gibbs measures
describe several competing latent representations. The choice of the solution
$(u,v)$ determines the learned phase of the model, and the prediction
$\widehat{\varphi}(x)$ is the posterior output of the corresponding
hierarchical energy-based model.

The phase transition phenomenon has a natural machine learning interpretation:
when the inverse temperature $\beta$ increases, the model becomes more
sensitive to interactions and may move from a single stable representation to
several competing representations. This mechanism provides a mathematical
framework for describing multi-modal learning, clustering phenomena, latent
state separation, and symmetry breaking in hierarchical energy-based models.

Hence, the family of Gibbs measures generated by the symmetric and
non-symmetric solutions provides a rigorous probabilistic description of how
complex learning systems can move between homogeneous representation regimes
and regimes with multiple specialized latent structures.

\end{rk}

\section{Numerical experiments for the non-separable case: $C \ne 0$}\label{sec:exp}



We now replace the model kernel by a kernel generated, in the manner of
Section \ref{sec:quadratic-nonseparable}, from a dataset.

{\bf The dataset and its induced kernel.}
Let $\{(x_i,y_i)\}_{i=1}^{N}$ with $N=400$ be points $x_i\in\mathbb R^{5}$
and labels $y_i\in\{-1,+1\}$, the two classes being generated as samples of
$\mathcal N(\mu,I_5)$ and $\mathcal N(-\mu,I_5)$ in equal proportion, with
$\mu=(\tfrac{13}{10},0,0,0,0)$ (Figure \ref{fig:data}). To each point we associate a local parameter
$\sigma(x_i)\in[0,1]$, obtained by projecting $x_i$ onto the direction
$w=(\bar x_{+}-\bar x_{-})/\lVert \bar x_{+}-\bar x_{-}\rVert$, where
$\bar x_{\pm}$ are the class means, and mapping the projection affinely into
$[0,1]$. The parameter $\sigma(x)$ thus records the position of $x$ relative to
the separating hyperplane.

With the affine pair-predictor $\mathcal F(x_i;t,u)=\sigma(x_i)\,t+u$ and the
squared loss $\ell(y,\hat y)=(y-\hat y)^2$, the empirical interaction loss is,
as in Case~3,
\[
\mathcal L_N(t,u)
=\frac1N\sum_{i=1}^{N}\bigl(y_i-\sigma(x_i)\,t-u\bigr)^2
=At^2+Bu^2+Ctu+Dt+Eu+F,
\]
with coefficients
\[
A=\frac1N\sum_i\sigma_i^2,\quad
B=1,\quad
C=\frac2N\sum_i\sigma_i,\quad
D=-\frac2N\sum_i\sigma_i y_i,\quad
E=-\frac2N\sum_i y_i,\quad
F=\frac1N\sum_i y_i^2 .
\]
For the present dataset these take the values
$A\approx0.31$, $B=1$, $C\approx1.01$, $D\approx-0.39$, $E\approx0$,
$F=1$. Since $C\neq0$, the kernel $\eta_{tu}=e^{\beta\mathcal L_N(t,u)}$ does
not factorize as in Case~2, and one is in the non-separable situation of
Case~3. Moreover $\mathcal L_N(t,u)\geq\min_{[0,1]^2}\mathcal L_N\approx0.88>0$,
so that the positivity hypothesis $\mathcal L_N(t,u)>0$ of Theorem~1 is
satisfied, and the uniqueness assertion of that theorem applies on the Cayley
tree of order $k=1$. See Figures \ref{fig:data-loss} and \ref{fig:data-kernel} for the visualizations of the loss function and the corresponding data kernel.

{\bf The translation-invariant equations for $k=2$.}
For the tree of order $k=2$ we consider the translation-invariant system
\eqref{fexp}--\eqref{gexp} with the data-induced kernel
$\eta_{tu}=e^{\beta\mathcal L_N(t,u)}$, namely
\[
f(t)=\left(\frac{\int_0^1\eta_{tu}f(u)\,du+\int_0^1\eta_{tu}^{-1}g(u)\,du}{D}\right)^2,
\quad
g(t)=\left(\frac{\int_0^1\eta_{tu}^{-1}f(u)\,du+\int_0^1\eta_{tu}g(u)\,du}{D}\right)^2,
\]
with $D=\int_0^1\eta_{0u}^{-1}f(u)\,du+\int_0^1\eta_{0u}g(u)\,du$. Each of these
equations contains integrals of the unknown functions over $[0,1]$, and so
cannot be treated as a finite system as it stands. We therefore replace every
integral by a quadrature rule, which is the only point at which an
approximation enters.

\begin{figure}[t]\centering
\includegraphics[width=.9\linewidth]{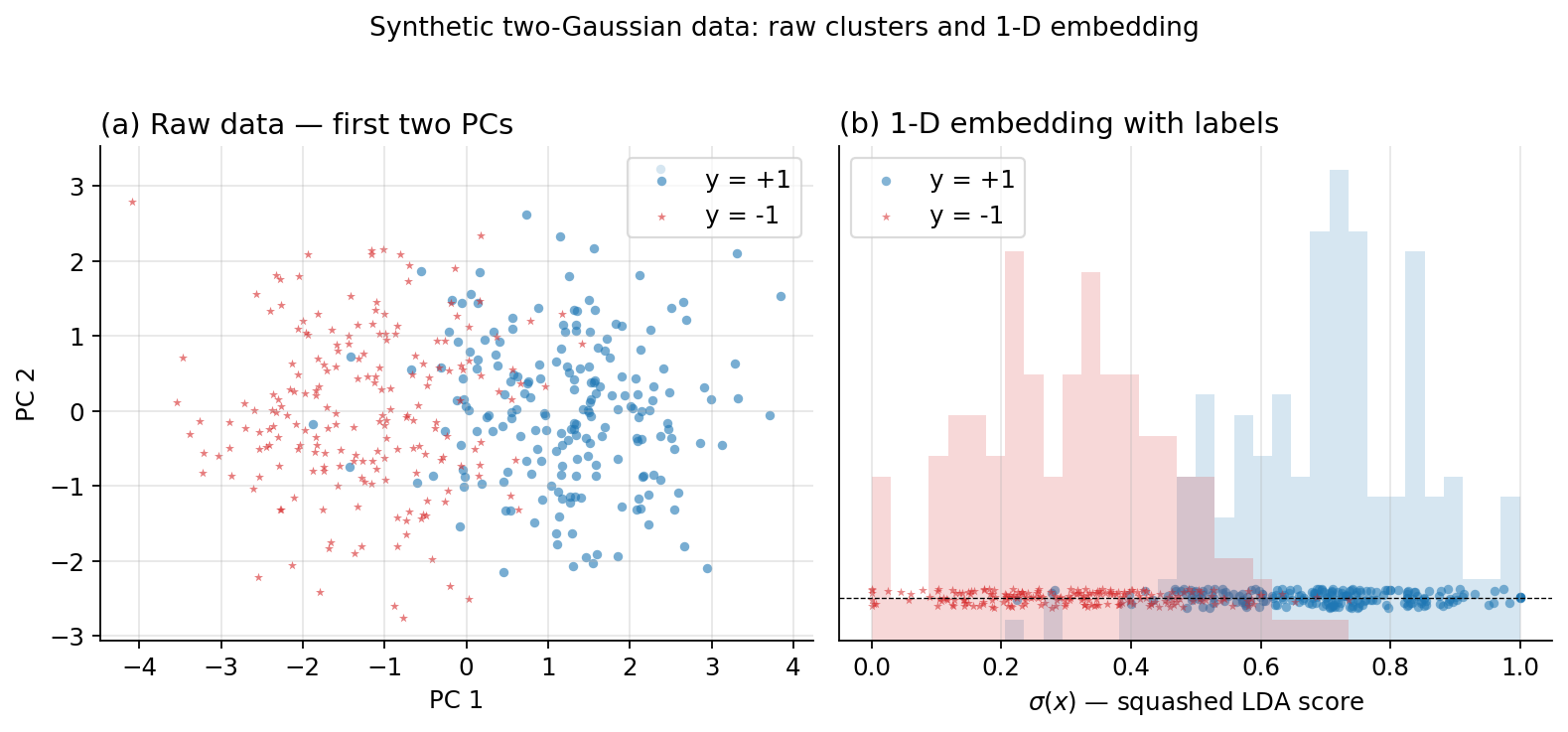}
\caption{Synthetic two-Gaussian data (left, first two principal components) and
the one-dimensional embedding $\sigma(x)$ with class-conditional histograms
(right).}
\label{fig:data}
\end{figure}

\begin{figure}[t]\centering
\includegraphics[width=.6\linewidth]{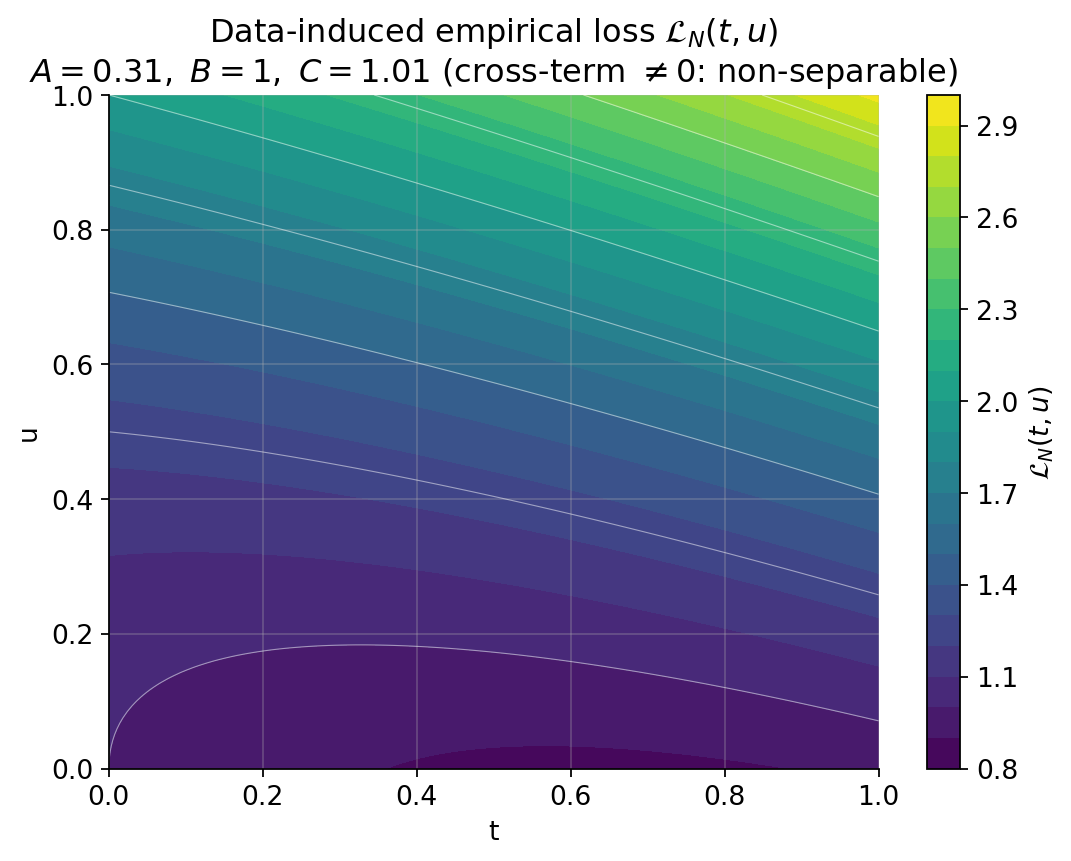}
\caption{Data-induced empirical loss $\mathcal L_N(t,u)$. The non-zero cross term
($C=1.01$) makes the kernel non-separable (Case~3); the surface is strictly
positive ($\min=0.88$).}
\label{fig:data-loss}
\end{figure}

\begin{figure}[t]\centering
\includegraphics[width=.95\linewidth]{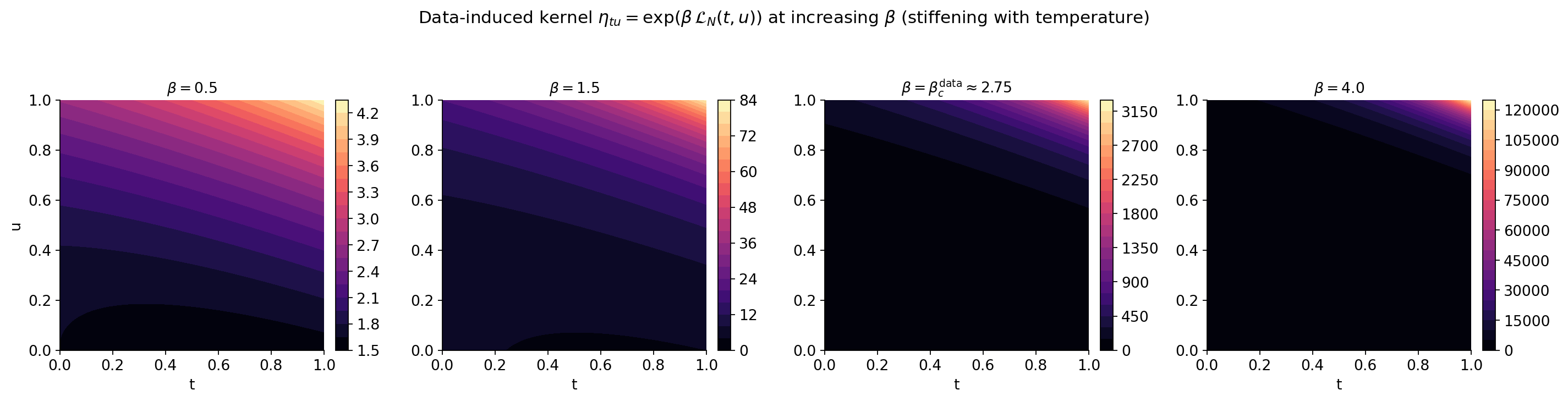}
\caption{Data-induced kernel $\eta_{tu}=e^{\beta\mathcal L_N(t,u)}$ at increasing
$\beta$. As $\beta$ grows the kernel stiffens and concentrates near a corner of
$[0,1]^2$, which is the source of the high-$\beta$ numerical difficulty.}
\label{fig:data-kernel}
\end{figure}

{\bf Discretization of the integrals.}
A quadrature rule replaces an integral by a finite weighted sum,
$\int_0^1\psi(u)\,du\approx\sum_{j=1}^{m}w_j\,\psi(u_j)$, sampled at prescribed
nodes $u_1,\dots,u_m\in[0,1]$ with weights $w_1,\dots,w_m>0$. We use the
Gauss--Legendre rule, in which the nodes are the (rescaled) roots of the
Legendre polynomial of degree $m$ and the weights are chosen so that the
approximation is exact for every polynomial of degree at most $2m-1$. A function
that is smooth and well approximated by polynomials on $[0,1]$ is thus
integrated to high accuracy already for moderate $m$; the nodes cluster towards
the endpoints $0$ and $1$, where they are most needed. We say that the rule
\emph{resolves} the integrand when the $m$ sampled values $\psi(u_1),\dots,
\psi(u_m)$ capture its variation faithfully, so that the weighted sum is close
to the true integral. This is the case precisely when $\psi$ does not vary
appreciably on the scale of the gaps between adjacent nodes.

Writing $f_i=f(u_i)$, $g_i=g(u_i)$ and
$\eta_{ij}=e^{\beta\mathcal L_N(u_i,u_j)}$, the system above becomes the finite
nonlinear system
\begin{equation}\label{exp:disc}
f_i=\left(\frac{\sum_{j}w_j\eta_{ij}f_j+\sum_{j}w_j\eta_{ij}^{-1}g_j}{D}\right)^{2},
\quad
g_i=\left(\frac{\sum_{j}w_j\eta_{ij}^{-1}f_j+\sum_{j}w_j\eta_{ij}g_j}{D}\right)^{2},
\qquad i=1,\dots,m,
\end{equation}
with $D=\sum_{j}w_j\eta_{0j}^{-1}f_j+\sum_{j}w_j\eta_{0j}g_j$ and
$\eta_{0j}=e^{\beta\mathcal L_N(0,u_j)}$. This is a system of $2m$ equations for
the $2m$ unknowns $(f_i,g_i)$, equivalent to the integral equations in the limit
$m\to\infty$.

{\bf Iterative solution.}
The right-hand side of \eqref{exp:disc} defines a map
$(f,g)\mapsto\Phi_\beta(f,g)$ on positive vectors, whose fixed points are
exactly the solutions sought; the boundary law $(f,g)$ being determined only up
to a common positive factor, we normalize after each application so that the
mean of the entries is fixed. Starting from an initial pair $(f^{(0)},g^{(0)})$,
we form the damped iteration
\[
(f^{(\ell+1)},g^{(\ell+1)})
=(1-\theta)\,(f^{(\ell)},g^{(\ell)})
+\theta\,\Phi_\beta(f^{(\ell)},g^{(\ell)}),
\qquad \theta=\tfrac12,
\]
and continue until successive iterates differ by less than a fixed tolerance. To locate all solutions rather than a single one, the
iteration is run from several initial pairs: the symmetric choice
$f^{(0)}=g^{(0)}\equiv1$; the two polarized choices
$f^{(0)}(t)=e^{\beta t},\,g^{(0)}(t)=e^{-\beta t}$ and its interchange, which
favour the two symmetry-broken branches; and a number of random positive
initializations. Each run that converges yields a fixed point of
\eqref{exp:disc}, hence an approximate boundary law $(f,g)$.

A solution is counted once, two solutions being regarded as identical when they
coincide after multiplication by a positive constant and after the interchange
$(f,g)\mapsto(g,f)$ induced by the symmetry $\varphi\mapsto-\varphi$ of the
Hamiltonian \eqref{uy1}. To exclude artefacts of the discretization, only those
solutions are retained whose count is unchanged when the number $m$ of
quadrature nodes is increased.

\begin{figure}[t]\centering
\includegraphics[width=.9\linewidth]{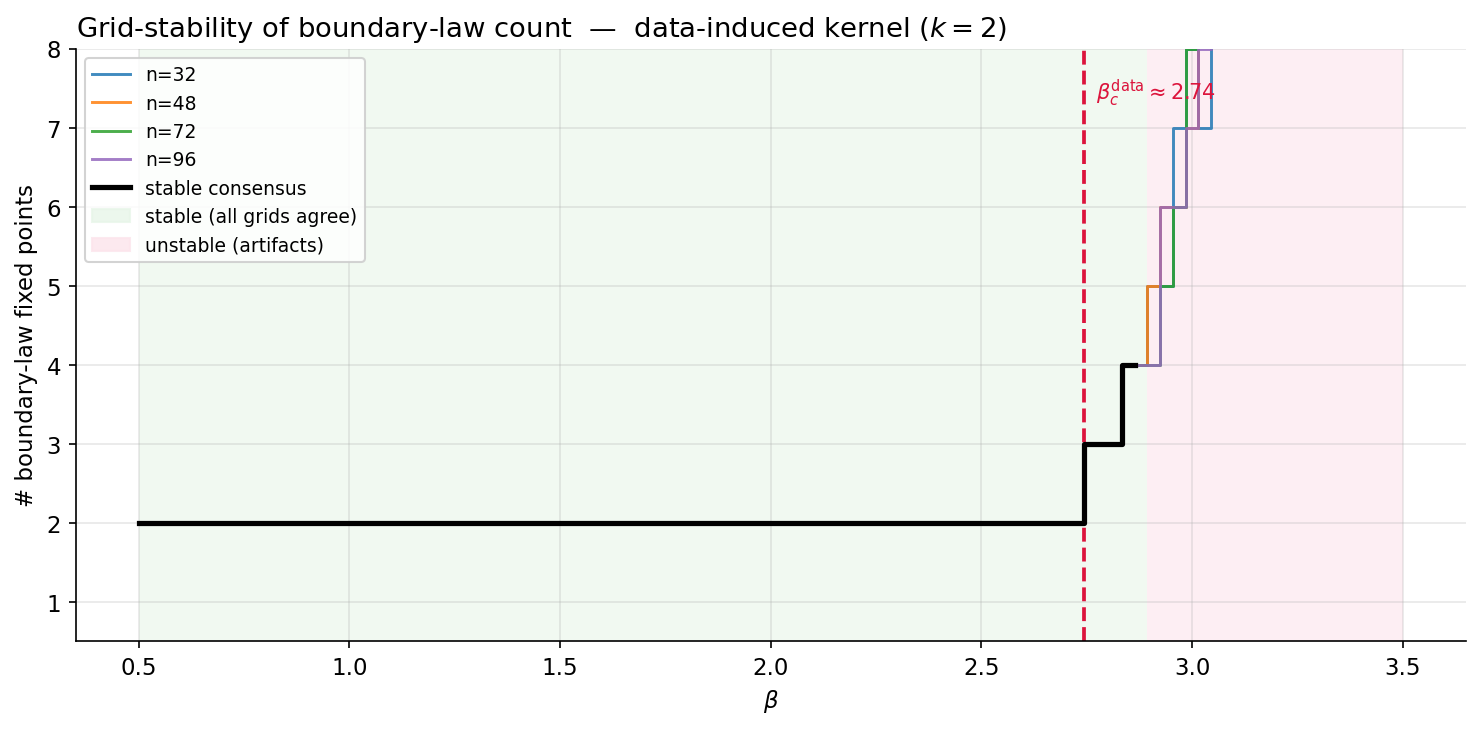}
\caption{Data-induced kernel: the grid-stable number of boundary-law solutions
reproduces the $2\to3$ transition at $\beta_c^{\mathrm{data}}\approx2.74$. There is a small region at the end of the green-shaded interval where the number of solutions transition from 3 to 4 after which (the pink-shaded interval) the numerical results for different grids disagree, rendering them inconclusive.}
\label{fig:data-transition}
\end{figure}

{\bf The observed transition.}
Let $n_G(\beta)$ denote the number of translation-invariant solutions obtained
in this way. One finds a critical value $\beta_c^{\mathrm{data}}$, with
$\beta_c^{\mathrm{data}}\approx2.74$ for the dataset above, such that
\[
n_G(\beta)=
\begin{cases}
2, & \beta<\beta_c^{\mathrm{data}},\\[1mm]
3, & \beta\geq\beta_c^{\mathrm{data}}.
\end{cases}
\]
At the transition the unique symmetric solution $f=g$ persists, and a pair of
non-symmetric solutions $f\neq g$, interchanged by $\varphi\mapsto-\varphi$,
appears. This is precisely the structure described analytically in
Subsection~\ref{k2a}: the dataset breaks the symmetry between the two values of
$\varphi$. The numerical value of $\beta_c^{\mathrm{data}}$ differs from the
analytic $\beta_c=\ln t_{0,c}\approx1.538$ because the kernel is different; what
is common to the two situations, and is the point of the illustration, is the
existence of a finite critical inverse temperature together with the passage of
$n_G$ from two to three. See Figure \ref{fig:data-transition} for the visualization of a more sophisticated phase transitions which could not be captured by the analytical theory.


{\bf Interpretation.} In agreement with the discussion of Sections~2 and 5, the number $n_G(\beta)$ of Gibbs measures plays the role of an order parameter of the learning system. For $\beta<\beta_c^{\mathrm{data}}$ the dataset determines a single equilibrium learning state and, through the conditional expectation of Section~5, a single prediction rule. For $\beta\geq\beta_c^{\mathrm{data}}$ several equilibrium states coexist, the symmetry-broken ones corresponding to the two label polarities, and the prediction rule becomes multi-valued. In keeping with Remark~3, the Cayley tree is used here as a hierarchical inference structure on which the consistency equations \eqref{uy5} are exact, and not as a description of the geometry of the data.


We conclude by recording, as a procedure, the prediction scheme determined by
the construction of this paper.

\begin{itemize}
\item[(1)] Given a dataset $\{(x_i,y_i)\}_{i=1}^{N}$, a predictor
$\mathcal F$, a loss $\ell$, and an inverse temperature $\beta>0$, assign to
each $x_i$ a local parameter $\sigma(x_i)\in[0,1]$ and form the empirical
interaction loss
$\mathcal L_N(t,u)=\frac1N\sum_{i}\ell\bigl(y_i,\mathcal F(x_i;t,u)\bigr)$
together with the kernel $\eta_{tu}=e^{\beta\mathcal L_N(t,u)}$.

\item[(2)] On the Cayley tree of order $k$, determine the translation-invariant
boundary laws $(f,g)$ solving the system \eqref{uy5} for this kernel. By
Proposition~1 each such pair defines a translation-invariant Gibbs measure
$\mu$; for $k=1$ this measure is unique by Theorem~1 whenever
$\mathcal L_N>0$, while for $k=2$ several measures may occur, as in
Subsection~\ref{k2a}.

\item[(3)] For each Gibbs measure $\mu$ so obtained and each unobserved vertex
$x\in V\setminus V_N$, compute the conditional partition functions
$Z_x(\pm1\mid\mathcal D)$ of Section~5 and set
\[
\widehat\varphi(x)
=\mathbb E_{\mu}\bigl[\varphi(x)\mid\mathcal D\bigr]
=\frac{Z_x(+1\mid\mathcal D)-Z_x(-1\mid\mathcal D)}
       {Z_x(+1\mid\mathcal D)+Z_x(-1\mid\mathcal D)},
\qquad
\mathcal D=\{\varphi(x_i)=y_i\}_{i=1}^{N}.
\]
The predicted label is $\operatorname{sign}\widehat\varphi(x)$.

\item[(4)] When the parameters $(\beta,k)$ are such that the Gibbs measure is
unique, the rule of step~(3) is single-valued. When several Gibbs measures
coexist, each yields its own prediction rule
$\widehat\varphi(x)$; these may be reported together as competing labelings, or
one of them selected. The number of Gibbs measures is itself an order parameter
of the learning system, equal to one effective prediction rule below the
critical inverse temperature and to several above it.
\end{itemize}

	\section*{Data availability statements}
	The datasets generated during and/or analysed during the 
	current study are available from the corresponding author (U.A.Rozikov) on reasonable request.
	
	\section*{Conflicts of interest} The authors declare no conflicts of interest.

	\section*{ Acknowledgements}

    The authors gratefully acknowledge the University of Granada for	awarding the Visiting Scholar Grant (PPVS2024.04) to U.A. Rozikov. 
        He also expresses his sincere gratitude to the DaSCI Institute and IMAG for their kind invitation and support during his academic visit.  
	
    This publication is part of the project ``Ethical, Responsible, and General Purpose Artificial Intelligence: Applications
In Risk Scenarios" (IAFER) Exp.:TSI-100927-2023-1 funded through the creation of university-industry research
programs (ENIA Programs), aimed at the research and development of artificial intelligence, for its dissemination and
education within the framework of the Recovery, Transformation and Resilience Plan of the European Union Next
Generation EU through the Ministry of Digital Transformation and the Civil Service. This work was also partially
supported by Knowledge Generation Projects, funded by the Spanish Ministry of Science, Innovation, and Universities
of Spain under the project PID2023-150070NB-I00. \\

	\end{document}